\documentclass[sigconf, nonacm]{acmart}






\usepackage{float}
\usepackage[normalem]{ulem}
\usepackage{makecell,multirow,diagbox}  
\usepackage{float}
\usepackage{adjustbox}
\usepackage{lscape}
\usepackage{amsfonts,amssymb}
\usepackage{amsthm}
\usepackage{amsmath}
\usepackage{amssymb}
\usepackage{color}
\usepackage{xcolor}
\usepackage{algorithm}
\usepackage[noend]{algorithmic}
\usepackage{subfigure}
\usepackage{enumitem}
\usepackage{color, colortbl}
\usepackage{booktabs}
\usepackage{graphicx}
\usepackage{url}
\useunder{\uline}{\ul}{}

\definecolor{gray}{HTML}{545454}

\newcommand{\DG}{\Delta G}
\usepackage{amsmath}
\usepackage{amsfonts}

\definecolor{gray}{HTML}{545454}

\newcommand{\real}{\mathbb R}

\newcommand{\mcF}{\mathcal{F}}

\newcommand{\mcL}{\mathcal{L}}

\newcommand{\mcS}{\mathcal{S}}

\newcommand{\mbp}{\mathbf{p}}

\newcommand{\ie}{\emph{i.e., }}

\newcommand{\aka}{\emph{aka. }}
\newcommand{\vs}{\emph{v.s. }}

\newtheorem{theorem}{Theorem}[section]
\newtheorem{definition}{Definition}[section]
\newtheorem{assumption}{Assumption}[section]

\newtheorem{lemma}{Lemma}[section]
\newtheorem{proposition}{Proposition}[section]

\newtheorem*{theorem*}{Theorem}
\newtheorem*{definition*}{Definition}
\newtheorem*{assumption*}{Assumption}
\newtheorem*{conjecture*}{Conjecture}
\newtheorem*{claim*}{Claim}
\newtheorem*{lemma*}{Lemma}
\newtheorem*{proposition*}{Proposition}
\newtheorem*{property*}{Property}
\newtheorem*{fact*}{Fact}
\newtheorem*{corollary*}{Corollary}
\newtheorem*{example*}{Example}
\newtheorem*{remark*}{Remark}
\newtheorem*{exercise*}{Exercise}

\begin{document}
\title{A Bargaining-based Approach for Feature Trading in Vertical Federated Learning}

\author{Yue Cui}
\affiliation{%
  \institution{The Hong Kong University \\of Science and Technology}
  \city{Hong Kong SAR}
  \country{China}
}
\email{ycuias@cse.ust.hk}

\author{Liuyi Yao}
\affiliation{%
  \institution{Damo Academy\\  Alibaba Group}
  \city{Hangzhou}
  \country{China}
}
\email{yly287738@alibaba-inc.com}


\author{Zitao Li}
\affiliation{%
  \institution{Damo Academy\\  Alibaba Group}
  \city{Seattle}
  \country{USA}
}
\email{zitao.l@alibaba-inc.com}

\author{Yaliang Li}
\affiliation{%
  \institution{Damo Academy\\  Alibaba Group}
  \city{Seattle}
  \country{USA}
}
\email{yaliang.li@alibaba-inc.com}

\author{Bolin Ding}
\affiliation{%
  \institution{Damo Academy\\ Alibaba Group}
  \city{Seattle}
  \country{USA}
}
\email{bolin.ding@alibaba-inc.com}

\author{Xiaofang Zhou}
\affiliation{%
  \institution{The Hong Kong University \\of Science and Technology}
  \city{Hong Kong SAR}
  \country{China}
}
\email{zxf@cse.ust.hk}

\begin{abstract}
Vertical Federated Learning (VFL) has emerged as a popular machine learning paradigm, enabling model training across the data and the task parties with different features about the same user set while preserving data privacy. In production environment, VFL usually involves one task party and one data party. Fair and economically efficient feature trading is crucial to the commercialization of VFL, where the task party is considered as the data consumer who buys the data party's features. However, current VFL feature trading practices often price the data party's data as a whole and assume transactions occur prior to the performing VFL. Neglecting the performance gains resulting from traded features may lead to underpayment and overpayment issues. In this study, we propose a bargaining-based feature trading approach in VFL to encourage economically efficient transactions. Our model incorporates performance gain-based pricing, taking into account the revenue-based optimization objectives of both parties. We analyze the proposed bargaining model under perfect and imperfect performance information settings, proving the existence of an equilibrium that optimizes the parties' objectives. Moreover, we develop performance gain estimation-based bargaining strategies for imperfect performance information scenarios and discuss potential security issues and solutions. Experiments on three real-world datasets demonstrate the effectiveness of the proposed bargaining model.
\end{abstract}
\maketitle



\section{Introduction}

Federated learning (FL) has become a popular machine learning paradigm for that it allows model training to happen across multiple devices or institutions, \aka clients, while keeping their data localized. 
Vertical Federated Learning (VFL) is a subcategory of FL that enables the training of clients' datasets sharing the data from the same set of users while holding different features. 

The increasing demand for VFL is evident in recent industrial trends~\cite{wu2020privacy,cheng2021secureboost,chen2021secureboost+,niknam2020federated}. Organizations with limited or fragmented datasets are driven to partner with complementary data sources to amplify the training of machine learning models. Alongside, global concerns about data breaches and privacy violations have amplified the urgency for enhanced security measures. Consequently, a rise in privacy-focused VFL platforms and initiatives has been observed \cite{liu2021fate,xie2023federatedscope,he2020fedml}. While multi-party VFL is a subject of extensive academic research~\cite{liu2022vertical}, insight from the China Academy of Information and Communications Technology's (CAICT) White Paper on Federated Learning Scenario Applications\footnote{\url{http://www.caict.ac.cn/kxyj/qwfb/ztbg/202202/P020220222528294962585.pdf} \label{footnote:white}}, \cite{xie2023federatedscope}, and \cite{liu2021fate}, indicates that in practical applications or production environments, VFL is often implemented with a single task party and data party. Specifically, according to the white paper, this 1v1 VFL paradigm has been employed by commercial banks to amalgamate external data when constructing joint anti-fraud models, establish potential insurance users alongside insurance companies, as well as by advertisers to conduct user modeling with data of external media platforms. The commercial relevance and economic impact attributed to this 1v1 VFL model is steadily growing. Therefore, in this paper, we focus on the VFL setting characterized by a singular task party and a singular data party.

In the current era, data has transformed into a valuable commodity, often considered as goods with specific prices \cite{pei2021data,pei2020survey,jia2023research,chai2022selective,acemoglu2022too}. This valuation is prominently seen in digital products, where information serves as both a resource and product, forming the backbone of modern services and technologies. The two key roles in VFL can also be considered as data consumer and the data provider: the ``task party", responsible for labels and performing downstream tasks, and the ``data party", supplying the essential feature data. Current works in VFL mostly concentrate on protecting the security and privacy of data or information transmission between the task party and the data party, ensuring that collaboration occurs without compromising the integrity and confidentiality of the involved parties. However, the intricate aspect of data valuation, which underpins many economic decisions in other contexts, has not been extensively explored or studied. 

In the current production environment, data trading in VFL is mostly conducted at party level. More specifically, the data party names a price for its feature, and when a task party is paired with it for VFL, the task party buys all features offered by the data party. However, this may lead to undesired outcomes: 1) for the task party, buying all data from the data party, it also pays for features that may not be useful in enhancing model performance, and 2) for the data party, as the utility of the data is contingent upon task parties' different application scenarios and production environments, it may under-estimate/over-estimate the value of the features. This indiscriminate and one-shot trading of all features between parties highlights the need for a more nuanced and economically efficient approach for feature trading in VFL. Bargaining has proven to be effective in various economic contexts where there are two players \cite{nash1950bargaining} participated, which permits parties to negotiate and agree upon the value of individual features or combinations of features, resulting in a more flexible and efficient allocation of resources. The iterative process where a party could revisit its offers also encourages the achieving of mutual benefit. Inspired by this, we here propose to introduce bargaining for the feature trading in VFL.

Bargaining in the context of VFL diverges from traditional data asset markets in several key aspects, and this brings non-trivial challenges. First, in conventional one-to-one bargaining in data markets, the buyer seeks to obtain specific goods, such as datasets, from the seller \cite{jia2023research}. The participants then negotiate the price of these goods. In contrast, within the setting of VFL, bargaining is result-oriented. For the task party, it targets to maximize the net profit of using the bought feature to perform VFL. For the data party, it aims at pricing the features with their utility considered and maximizing the payment of selling features. As a consequence, the pricing mechanism should take model performance gain into consideration. 

Second, in conventional one-one bargaining, the information advantage is clearer, \ie the seller has more information about the value of goods, and, thus is in a better place to take the lead during the game \cite{nash1950bargaining}. While in VFL market, the value of features is hard to be known in advance to both the data party (seller) and the task party (buyer). This uncertainty arises primarily for two reasons: 1) a newly paired task party and data party have no information about the performance gain that their collaboration could result in until VFL is conducted; and 2) even if the performance gain is known in advance, the data party lacks insights about the task party's model utility, which is related to the production environment and not revealed since they are proprietary business information belonging to the task party. Therefore, it is important to manage the information on the market and design fair bargaining model accordingly.

To deal with these problems, we first define a performance gain related payment function to price the outcome of a VFL course, based on which we formulate the objectives of the participants in the form of revenues. A bargaining model with the key idea that the task party targets a certain performance gain, offering a quoted price to the data party and the data party responds with products, \ie feature bundles, is proposed. The negotiation process continues until both parties agree on a feature bundle - payment matching (acceptance) or until they are unable to reach an agreement (breakdown). The task party is designated to take the lead during the game because it is more sensitive to the outcome of the VFL, as well as be aware of the budget and utility rate of its model. 

We analyze the bargaining under perfect performance information and imperfect performance information settings, distinguished by if the performance gain resulting from a feature bundle can be known in advance. The existence of an equilibrium under perfect performance information is proved theoretically. As bargaining is an iterative process with time cost, we further discuss the equilibrium by considering the cost factor. Further, bargaining strategies are established in imperfect performance information setting based on learning-based estimation. 

Key contributions of this paper are summarized as follows.
\begin{itemize}
	\item We identify the need for an economically efficient approach for feature trading in VFL and propose to introduce bargaining in VFL market.
	\item A bargaining model is designed with performance gain based pricing. We analyze the proposed bargaining model under perfect performance information setting with and without bargaining cost considered, proving the existence of an equilibrium that optimizes the objectives of the task party and the data party.
	\item Performance gain estimation based bargaining strategies are proposed under the imperfect performance information setting for both parties. We further discuss possible security issues and solutions for the proposed VFL market.
	\item We conduct extensive experiments on three real-world datasets, verifying the effectiveness of the proposed bargaining model.
\end{itemize}

\section{Preliminary}
\label{sec:pre}

\begin{table}[t]
\centering
\vspace{-0.1cm}
\caption{Notations and descriptions.}
\vspace{-0.2cm}
\begin{tabular}{@{}ll@{}}
\specialrule{0.3mm}{0em}{0em} 
Notations           & Descriptions                                                                                                                                                                                          \\ \hline
$T$                 & The $T$-th bargaining round                                                                                                                                                                           \\ \hline
$F$,$\mcF$          & Feature bundle, a set of feature bundles                                                                                                                                                              \\ \hline
$\DG$               & Performance gain of the task party                                                                                                                                                                    \\ \hline
$\mbp=(p,P_0,P_h)$  & \begin{tabular}[c]{@{}l@{}}Task party's quoted price, composed of payment \\rate $p$, base payment $P_0$, and highest payment $P_h$\end{tabular}                                                  \\ \hline
$(p_{l,i},P_{l,i})$ & \begin{tabular}[c]{@{}l@{}}Data party's reserved price of feature bundle $F_i$, \\ composed of reserved price of payment rate $p_{l,i}$, \\ and reserved price of base payment $P_{l,i}$\end{tabular} \\ \hline
$u$                 & Task party's utility rate                                                                                                                                                                             \\ \hline
$R_d$               & Revenue received by the data party                                                                                                                                                                    \\ \hline
$R_t$               & Revenue received by the task party                                                                                                                                                                    \\ \hline
$f$, $\theta_f$     & \begin{tabular}[c]{@{}l@{}}Task party's performance gain estimation model, \\ model weights\end{tabular}                                                                                              \\ \hline
$g$, $\theta_g$     & \begin{tabular}[c]{@{}l@{}}Data party's performance gain estimation model, \\ model weights\end{tabular}                                                                                              \\ \specialrule{0.3mm}{0em}{0em} 
\end{tabular}
\label{tb:notation}
\vspace{-0.1cm}
\end{table}

Key notations and descriptions used in this paper are summarized in Table \ref{tb:notation}. We assume that both parties have $n$ aligned training samples. The task party locally maintains a dataset $\{X_t, Y\}$ of the $n$ samples, where $X \in \real^{n\times d_t}$ are $d_t$ features and $Y \in \real^n$ are the labels. The data party locally maintains a dataset $\{X_d\}$, where $X_d \in \real^{n\times d_d}$ are $d_d$ features of the n samples. 

We denote the performance of the model trained by the task party itself as $M_0$ and the performance of a model trained after VFL as $M$. The performance gain $\DG$ is calculated as the relative improvement of performance, \ie 
\begin{equation}
\DG=\frac{M-M_0}{M_0}.
\label{eq:deltaG}
\end{equation}

In the above formula, we assume the performance metric the higher the better. If not, the sign of $\DG$ needs to be reversed. 

\begin{definition}{Feature Bundle.}
Given the $d_t$ features of the data party, denoted as $\mcS$, a feature bundle denoted as $F$ is a combination of individual features in $\mcS$, \ie $F \subseteq \mcS$. Denote the set of feature bundles as $\mcF$.
\end{definition}
Feature bundles are the goods selling on the VFL market.

\begin{definition}{Quoted Price.}
The task party quotes for a feature bundle of the data party by stating the payment rate $p$, the base payment $P_0$, and the highest payment $P_h=P_0+C$, where $C\geq 0$. $\textbf{p}=(p, P_0, P_h)$ is referred to as the quoted price.
\end{definition}

\begin{definition}{Payment.}
Given the performance gain $\Delta G$ in a VFL course, the payment received by the data party is calculated as:
\begin{equation}
\min\{\max\{P_0,P_0+p\Delta G\},P_h\}.
\label{eq:payment}
\end{equation}
\end{definition}
The payment function indicates that the data party can receive a payment of at least $P_0$, at most $P_h$, and somewhere in between depending on $p$ and $\DG$.

\begin{definition}{Reserved Price.}
Given a feature bundle $F_i$, the reserved price, denoted as $(p_{l,i},P_{l,i})$, is a value privately maintained by the data party, which represents the pre-defined minimum base payment and minimum payment rate it expects to receive if offering the feature bundle to the task party. 
\end{definition}

Note that the reserved price can be considered as cost-related. For example, a feature bundle of a larger number of features may have higher reserved price as the collecting cost of the feature bundle is higher than a feature bundle of a smaller number of features. 

\section{The Bargaining-based VFL Market}

\subsection{Assumptions}
Before going deep into the proposed method, we first illustrate several essential assumptions of the VFL market.

\begin{assumption}[Individual Rationality]
The data party and the task party are rational, seeking to maximize their revenues, respectively.
\end{assumption}

\begin{assumption}[Task Party Leads]
The task party takes the lead during the game by offering the quoted price, and the data party reacts by offering the feature bundle. 
\end{assumption}

\begin{assumption}[Benign Clients]
\label{ass:beging}
The features offered by the data party and the performance gain reported by the task party are not manipulated. 
\end{assumption}

The intention of using Assumption \ref{ass:beging} is to exclude potential adversary participants that attack the FL environment with malicious behaviors, such as submitting noise instead of real model performance, so as to focus on the design of the bargaining model.

\subsection{Objectives of Participants}

Given the payment function defined in Section \ref{sec:pre}, the objectives of the task party and the data party can be formulated in the form of revenues. Typically, a higher-performing model generates greater utility for its owner. The task party enters the VFL market as a featured buyer, seeking to improve its model's performance with minimal feature buying costs. Let $u$ denote the utility increase resulting from a unit of performance gain. The task party's objective is to determine the best quoted price that maximizes net profit. Denote the net profit it can receive as $R_t$, the objective of the task party can be mathematically expressed as:

\begin{equation}
R_t=\max_{p,P_0,P_h} u \Delta G -\min\{\max\{P_0,P_0+p\Delta G\},P_h\}.
\label{eq:obj_task}
\end{equation}

The data party, on the other hand, aims to maximize the monetary payment it receives. However, the highest payment the data party can receive is limited by $P_h$ of the quoted price, which means that an overqualified feature bundle that achieves performance gain greater than $(P_h-P_0)/p$ will not be fairly paid. Therefore, a reasonable strategy for the data party is to offer a feature bundle that achieves performance gain as close to $(P_h-P_0)/p$ as possible. Mathematically, denote the payment it receives as $R_d$, the objective of the data party can be formulated as:

\begin{equation}
R_d=\min_{F} |P_h-\max\{P_0,P_0+p\Delta G\}|.
\label{eq:obj_data}
\end{equation}

\subsection{Iterative Bargaining}
\label{sec:workflow}


We design an interactive process where the feature trading operates in multiple rounds of price-feature offering until certain termination conditions are met. For each full bargaining round, the workflow can be described as follows,

\begin{itemize}
\item Step 1. The task party initiates the process by announcing $(p, P_0, P_h)$, which serves as a quote for features in the upcoming VFL course.
\item Step 2. The data party responds by determining a feature bundle $F$ to offer to the task party.
\item Step 3. The two parties proceed with VFL course on $F$.

\end{itemize}
In Step 1 and Step 2, there are termination conditions applied regarding the received $\DG$ in VFL course and their objectives.

\begin{figure}
\centering

\subfigure[The payment received by the data party as a function of $\DG$.]{
\centering
\includegraphics[width=0.5\linewidth]{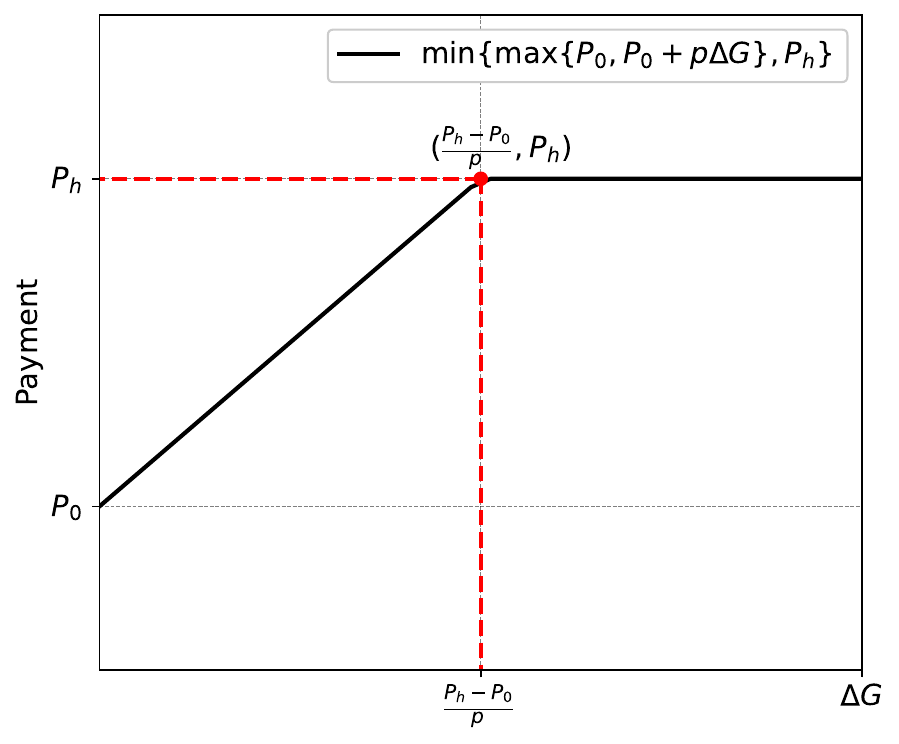}
\label{fig:data_payment}
}%
\subfigure[The net profit of the task party as a function of $\DG$.]{
\centering
\includegraphics[width=0.5\linewidth]{./figs/payment_function}
\label{fig:task_netprofit}
}%
\caption{$\DG$ \vs the objective functions of the data party and the task party.}
\label{fig:function}
\vspace{-0.4cm}
\end{figure}
We next discuss the bargaining strategies, as well as termination conditions, of two parties in two settings: perfect performance information and imperfect performance information. We first analyze bargaining in the perfect information setting, where the the parties have more awareness of each other, and then extend the analysis to imperfect setting. In the perfect information setting, we consider that the two parties know in advance about the performance gain of conducting VFL on each feature bundle of the task party. And for imperfect information, we suppose the performance gain is not known in advance unless VFL is formed after a round of iteration. We assume that in both situations, the objective functions of the parties are known to each other.

\subsection{Perfect Performance Information}

For the data party, the perfect information setting refers to the fact that it is aware of the performance $\DG_i$ of any feature bundle $F_i$ it owns when paired with a data party for bargaining. In the context of the task party, the concept of perfect information entails its awareness of the performance gain associated with the feature bundles. To be more precise, it possesses knowledge of $|\mcF|$ values of $\DG$, where $\mcF$ is the set of all feature bundles owned by the data party. Achieving this state of perfect information can be facilitated through the involvement of a trustworthy third party, such as a trading platform, which can conduct pre-bargaining training for both parties. It is important to clarify that this awareness does not extend to obtaining detailed information about the specific characteristics of individual feature bundles, as this would violate the privacy-preserving principles of FL. 

\subsubsection{Perfect Performance Information: Bargaining Analysis On The Data Party}
\label{sec:per_data}
We now delve into the bargaining strategy employed by the data party in the perfect performance information setting. Given a quoted price $(p,P_0,P_h)$ presented by the task party, the data party's objective is to choose the most optimal bundle within the confines of this price. The strategic approach of the data party is as follows. Initially, the data party compares the reserved prices of feature bundles with $(p,P_0)$, and discards bundles whose reserved prices exceed this threshold. It then finds a feature bundle whose performance gain lies nearest to, but does not surpass, $(P_h-P_0)/p$. This ensures the generation of a performance gain that would yield the maximum permissible payment within the range of $(p,P_0,P_h)$, eliminating undervaluation concerns.

It's important to highlight that, under the objective defined in Equation~\ref{eq:obj_data}, the data party lacks the motivation to offer a suboptimal feature. This is rooted in the fact that its compensation is contingent on the performance gain realized by the task party. Furthermore, given that the task party takes the lead during bargaining, the reserved price plays an important role in protecting the interest of the data party. This is because the payment the data part could receive is a function of $\DG$ and upper bounded by $P_h$. As depicted in Figure \ref{fig:data_payment}, with a predetermined quoted price, the payment received by the data party is monotonically increasing to $\DG$ when $\DG \leq (P_h-P_0)/p$ and holds as $P_h$ when $\DG > (P_h-P_0)/p$. 

In the meanwhile, according to the objective defined in Equation \ref{eq:obj_task}, the task party is driven to maximize its net profit. To this end, the task party seeks to depress both $p$ and $P_0$ while providing a sufficiently enough $P_h$. This ensures that the data party, enticed by the proposed compensation, delivers the highest quality feature bundle, all while the task party incurs minimal expenses. The introduction of the reserved price acts as a countermeasure to this behavior. It guarantees that only those feature bundles conforming to the quoted price are considered in a given bargaining round.

\subsubsection{Perfect Performance Information: Bargaining Analysis On The Task Party}

As defined in Equation \ref{eq:obj_task}, the primary goal of the task party revolves around maximizing the net profit derived from a VFL course. We assume that the utility rate is larger than the payment rate due to self rationality of the task party, \ie $u>p$. As illustrated in Figure \ref{fig:task_netprofit}, the net profit demonstrates a monotonic increase with respect to $\DG$, yet it remains negative when the performance gain falls below $P_0/(u-p)$. To establish the pricing strategy adopted by the task party, we propose the following theorem.

Let $\DG_{max}$ denote the highest achievable performance gain attainable through the feature bundles. Consider a negotiation round in which the task party has presented a specific quoted price $(p, P_0, P_h)$, and subsequently, a feature bundle is selected, yielding a performance gain of $\DG$ after undergoing VFL. Denote the payment received by the data party as $R_d$, where 
\begin{equation*}
R_d=P_0+p\DG,
\end{equation*}
and the net profit received by the task party as $R_t$, where
\begin{equation*}
R_t=u\DG-P_0-p\DG.
\end{equation*}
We can formulate Theorem \ref{thm:1}.
\begin{theorem}
\label{thm:1}
There exists a quoted price $(p^*,P_0^*,P_h^*)$ that leads to the same bargaining result as $(p,P_0,P_h)$, \ie the same offered feature bundle, the same performance gain, the same net profit for the task party and the same payment on the data party, while it satisfies $(P^*_h-P^*_0)/p^*=\DG$.
\end{theorem}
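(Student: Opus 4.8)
The plan is to prove the statement by an explicit construction followed by verification of the four invariance requirements and the normalization identity. First I would keep the payment rate and base payment fixed and only lower the highest payment: set $p^\ast = p$, $P_0^\ast = P_0$, and $P_h^\ast = P_0 + p\DG$. Since the realized payment equals $R_d = P_0 + p\DG$, the payment is not clipped from above, whence $\DG \le (P_h - P_0)/p$; consequently $P_h^\ast = P_0 + p\DG \le P_h$ and $C^\ast := P_h^\ast - P_0^\ast = p\DG \ge 0$ (using $p > 0$ and $\DG \ge 0$). Hence $(p^\ast, P_0^\ast, P_h^\ast)$ is a legitimate quoted price, and by construction $(P_h^\ast - P_0^\ast)/p^\ast = \DG$, which is the required identity.

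Next I would argue that the data party selects the same feature bundle under the new quote; this is the heart of the proof. Recall from Section~\ref{sec:per_data} that the data party first discards every bundle whose reserved price exceeds $(p, P_0)$, and then, among the affordable survivors, chooses the bundle minimizing the objective in Equation~\ref{eq:obj_data}, namely $|P_h - \max\{P_0, P_0 + p\DG_i\}|$. Because $p^\ast = p$ and $P_0^\ast = P_0$, the affordability screening is identical, so the candidate set is unchanged. The originally selected bundle has performance gain exactly $\DG$; under the new quote its pre-cap payment $\max\{P_0^\ast, P_0^\ast + p^\ast \DG\} = P_0 + p\DG$ equals $P_h^\ast$ exactly, so its objective value is $0$, the global minimum. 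Therefore, under any consistent tie-breaking rule, the data party again offers this bundle and the realized performance gain is again $\DG$.

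With the offered bundle and the performance gain fixed, the remaining two conditions follow by direct substitution. The data party's payment under Equation~\ref{eq:payment} becomes $\min\{\max\{P_0^\ast, P_0^\ast + p^\ast \DG\}, P_h^\ast\} = \min\{P_0 + p\DG, P_0 + p\DG\} = P_0 + p\DG = R_d$, so the payment is preserved; the task party's net profit $u\DG - R_d$ is then preserved as well. This establishes all four sameness claims together with the normalization identity.

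The step I expect to be the main obstacle is the selection-invariance argument, since a priori shrinking $P_h$ could make some other affordable bundle strictly preferable to the data party. The construction neutralizes this because the originally chosen bundle is made to hit the new cap exactly, attaining objective value $0$; this optimality is robust to how one reads the data party's rule --- whether as ``the affordable bundle whose gain is nearest to, but does not surpass, the cap'' (the prose in Section~\ref{sec:per_data}) or as the literal absolute-value minimization in Equation~\ref{eq:obj_data} --- and I would state this robustness explicitly. I would also record the mild standing assumptions $p > 0$ and $\DG \ge 0$, which guarantee $C^\ast \ge 0$ and the non-saturation identity $R_d = P_0 + p\DG$ used throughout.
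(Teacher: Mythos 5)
Your construction is exactly the paper's: keep $p^\ast=p$, $P_0^\ast=P_0$, and lower the cap to $P_h^\ast=P_0+p\DG$, then check that payment and net profit are unchanged. The only differences are presentational --- you unify the paper's two cases ($(P_h-P_0)/p\geq\DG_{max}$ versus $<\DG_{max}$) into one argument by working directly with the realized $\DG$, and you spell out the selection-invariance step (the chosen bundle attains objective value $0$ under the new cap) that the paper asserts without justification --- so this is the same proof, done slightly more carefully.
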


\begin{proof}
We consider two situations regarding when $(P_h-P_0)/p \geq \DG_{max}$ and $(P_h-P_0)/p < \DG_{max}$.

1) Consider the scenario where $(P_h-P_0)/p \geq \DG_{max}$. According to the data party's offer strategy, the task party obtains a feature bundle that yields a performance gain of $\DG=\DG_{max}$. Consequently, the payment accrued by the data party becomes 
\begin{equation*}
R_d=P_0+p\DG_{max},
\end{equation*}
while the task party's net profit is given by
\begin{equation*}
R_t=u\DG_{max}-P_0-p\DG_{max}.
\end{equation*}

For this performance gain $\DG_{max}$ to be offered, the quoted price must satisfy the following criteria 1) $p\geq p_{l,max}$ and $P_0\geq P_{l,max}$, where $(p_{l,max},P_{l,max})$ is the reserved price of the feature bundle that generates $\DG_{max}$, 2) and $(P_h-P_0)/p \geq\DG_{max} $. Given these conditions, there exists a quoted price $(p^*,P^*_0,P_h^*)$, where
\begin{equation*}
p^*=p,P^*_0=P_0,P_h^*=p\DG_{max}+P_0\leq P_h,
\end{equation*}
based on which the data party will provide the same feature bundle with the associated performance gain $\DG_{max}$. The payment to the data party remains $R_d$, \ie
\begin{equation*}
R^*_d=P^*_0+p^*\DG_{max}=P_0+p\DG_{max}=R_d,
\end{equation*}
and the net profit of the data party continues to be $R_t$, \ie
\begin{equation*}
\begin{aligned}
R^*_t&=u\DG_{max}-(P^*_0+p^*\DG_{max})\\
&=u\DG_{max}-(P_0+p\DG_{max})=R_t.
\end{aligned}
\end{equation*}

2) Consider the scenario where $(P_h-P_0)/p < \DG_{max}$. Similarly, according to the data party's offer strategy, the task party obtains a feature bundle that yields a performance gain of $\DG=\DG_{i}$. Consequently, the payment accrued by the data party becomes $P_0+p\DG_{i}$, while the task party's net profit is given by $u\DG_{i}-P_0-p*\DG_{i}$. For this performance gain $\DG_{i}$ to be offered, the quoted price must satisfy the following criteria 1) $p\geq p_{l,i}$ and $P_0\geq P_{l,i}$, where $(p_{l,iu},P_{l,i})$ is the reserved price of the feature bundle that generates $\DG_{i}$, 2) and $(P_h-P_0)/p-\DG_{i} \leq \epsilon_1$. Given these conditions, there exists a quoted price $(p^*, P^*_0, P^*_h)$, where $p^*=p,P^*_0=P_0,P_h^*=p\DG_{i}+P_0\leq P_h$, based on which the data party will provide the same feature bundle with the associated performance gain $\DG_{i}$. The payment to the data party remains $R_d$, \ie $R^*_d=P^*_0+p^*\DG_{i}=P_0+p\DG_{i}=R_d$ and the net profit of the data party continues to be $R^*_t=u\DG_{i}-(P^*_0+p^*\DG_{i})=u\DG_{i}-P_0-p\DG_{i}=R_t$.

\end{proof}

\begin{lemma}
Under the condition of perfect information, presenting a quoted price $(p^*, P_0^*, P_h^*)$ such that $(P_h^* - P_0^*)/p^* = \Delta G$ is a weakly dominant strategy for the task party to attain a performance gain of $\Delta G$, where $\Delta G$ represents the performance improvement obtained by the task party in the current round.
\label{le:1}
\end{lemma}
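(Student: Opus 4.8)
The plan is to read the statement as a conditional weak-dominance claim in the leader--follower setting in which the task party leads with a quote and the data party responds with a bundle according to its fixed rule (pick the feasible bundle whose gain is nearest to, but not exceeding, $(P_h-P_0)/p$). I would compare the candidate quote $(p^*,P_0^*,P_h^*)$ satisfying $(P_h^*-P_0^*)/p^*=\DG$ against an arbitrary alternative quote and show that the task party's net profit $R_t$ is never smaller and is strictly larger against a nonempty family of alternatives. The single computation I need is that whenever a quote induces the bundle of gain $\DG$, feasibility of that bundle forces $(P_h-P_0)/p\geq\DG$, hence $P_0+p\DG\leq P_h$; the payment $\min\{\max\{P_0,P_0+p\DG\},P_h\}$ then collapses to $P_0+p\DG$, so that $R_t=u\DG-P_0-p\DG$, a quantity independent of $P_h$.

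First I would invoke Theorem~\ref{thm:1}, which already shows that any quote attaining $\DG$ is outcome-equivalent to the tight quote $(p,P_0,P_0+p\DG)$: the same bundle is offered, the same gain is realized, and the same payment $P_0+p\DG$ and net profit $R_t$ result. This immediately gives the ``weak'' half of the claim along the $P_h$ axis---the tight quote is exactly as good as every quote sharing its $(p,P_0)$ but using a larger $P_h$, provided the larger $P_h$ does not push the threshold $(P_h-P_0)/p$ up to the next achievable gain.

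Next I would dispose of deviations in $P_h$. Shrinking $P_h$ until $(P_h-P_0)/p<\DG$ makes the nearest-without-exceeding rule return a bundle of strictly smaller gain $\DG'<\DG$; because self-rationality gives $u>p$, the resulting $R_t=u\DG'-P_0-p\DG'=(u-p)\DG'-P_0$ is strictly below $(u-p)\DG-P_0$, so every such quote is strictly dominated. Enlarging $P_h$ past the next achievable gain instead causes the data party to switch to a higher-gain bundle, so the realized gain is no longer $\DG$ and the quote fails the stated goal of attaining $\DG$. Combining the two directions, within the set of quotes that attain $\DG$ the tight quote is weakly best, and it strictly beats the too-small-$P_h$ region, which is precisely the content of weak dominance.

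The main obstacle I anticipate is bookkeeping rather than algebra. Since achievable gains come from the finite set of bundles in $\mcF$, the map from $P_h$ to the realized gain is a step function, and I must argue carefully about its value exactly at the boundary $(P_h-P_0)/p=\DG$ (which bundle is selected at equality) and about the reserved-price constraints $p\geq p_{l}$, $P_0\geq P_{l}$ that determine which quotes can induce the $\DG$ bundle in the first place. Making ``weakly dominant'' precise here---quantifying over the follower's deterministic best response rather than over independent adversarial moves---is where the statement needs the most care, and I would state explicitly that the dominance is conditional on the target gain $\DG$, since a quote aimed at a higher gain can of course yield a larger profit.
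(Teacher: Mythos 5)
Your proposal is correct and follows essentially the same route as the paper: both proofs rest on Theorem~\ref{thm:1} to replace any quote attaining $\Delta G$ by the outcome-equivalent tight quote with $P_h^* = P_0 + p\Delta G$, and both read weak dominance conditionally, within the class of quotes that attain $\Delta G$. Your extra observations---that the net profit is independent of $P_h$ once the $\Delta G$ bundle is induced, and the explicit two-sided treatment of deviations in $P_h$---are more careful than the paper's one-line maximization argument, but they do not change the underlying approach.
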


\begin{proof}
Let us assume that in order to achieve the current performance gain $\Delta G$, there exists a set of quoted prices that the task party can offer, denoted as $\{(p^1,P^1_0,P^1_h),(p^2,P^2_0,P^2_h),...,(p^k,P^k_0,P^k_h)\}$. Denote the corresponding net profits as $\{R^1_t,R^2_t,...,R^k_t\}$. Among these, we identify the quoted price that yields the highest net profit as 
\begin{equation*}
(p,P_0,P_h)=\max_{(p^i,P^i_0,P^i_h),i\in{K}}R^i_t.
\end{equation*}
According to Theorem~\ref{thm:1}, there exists a quoted price $(p^*,P^*_0,P^*_h)$, where $p^*=p$, $P_0^*=P_0$, and $P^*_h=P_0+p\DG$ that satisfies $(P_h^* - P_0^*)/p^*= \Delta G$, while also yielding net profit $R^i_t$. This implies that $(p^*,P^*_0,P^*_h)$ weakly dominates other quoted prices within the set $\{(p^1,P^1_0,P^1_h)$, $(p^2,P^2_0,P^2_h)$, $\cdots$, $(p^k,P^k_0,P^k_h)\}$.
\end{proof}

Utilizing Theorem~\ref{thm:1} and Lemma~\ref{le:1}, we can establish the existence of an equilibrium price in conditions of perfect performance information, which adheres to the criterion
\begin{equation}
\frac{P_h-P_0}{p}=\DG. 
\label{eq:cri}
\end{equation}

This price demonstrates weak dominance over the income of both the task and data party. Therefore, a reasonable strategy for the task party is that it initially targets a specific performance gain value, and the bargaining process commences with an initial quoted price $(p, P_0, P_h)$ that satisfies Equation~\ref{eq:cri}. Should the transaction proceed without failure but yield a feature bundle with a performance gain below $\DG$, the task party incrementally adjusts the price, as it indicates that the target feature bundle's reserved price is higher than the current $(p,P_0)$. Another quoted price should be offered, ensuring compliance with the prescribed constraints while maximizing the net profit of the task party among the rest of the candidate price offers. The process repeats until a feature offer is presented that best matches the price offer, \ie reaching the condition where Equation \ref{eq:cri} is satisfied. Given the task party leads the bargaining, the payment received by the data party is bound by both $P_h$ and the task party's target performance gain, $\DG$, but it can also be maximized under such a quoted price, for which we refer to as the equilibrium price, as it leads to an outcome that both parties accept the bargaining.

\subsubsection{Perfect Performance Information: Termination Conditions}
\label{sec:per_termi}
Based on the above analysis, we design the following termination conditions to facilitate the reaching of the equilibrium price. Denote the quoted price offered by the task party as $(p,P_0,P_h)$, the feature bundle selected by the data party as described in \ref{sec:per_data} as $F_i$, the corresponding performance gain as $\DG_i$, and after a round of bargaining, the performance gain produces by the task party as $\DG$. The bargaining can be divided into the following cases.

On the side of the data party, during Step 2,
\begin{itemize}
\item Case 1: If there is no feature bundle $F_i$ in $\mcF$ that satisfies $p_{l,i}\leq p$ and $P_{l,i}\leq P_0$, the bargaining terminates by the data party with the transaction fails.

\item Case 2: If the selected $F_i$ satisfies $p_{l,i}\leq p$ and $P_{l,i}\leq P_0$, and $(P_h-P_0)/p-\DG_i\leq \epsilon_d$, the bargaining terminates by the data party with transaction successes and $F_i$ offered. 

\item Case 3: If the selected $F_i$ satisfies $p_{l,i}\leq p$ and $P_{l,i}\leq P_0$, while $(P_h-P_0)/p-\DG_i> \epsilon_d$, the bargaining proceeds with $F_i$ offered.
\end{itemize}

On the side of the task party, during Step 1,
\begin{itemize}

\item Case 4: If the received $\DG$ satisfies $\DG < P_0/(u-p)$, the bargaining terminates by the task party with transaction fails. 

\item Case 5: If the received $\DG$ satisfies $\DG \geq (P_h-P_0)/p-\epsilon_t$, the bargaining terminates by the task party with transaction successes and the task party makes payments. 

\item Case 6: If the received $\DG$ satisfies $P_0/(u-p) \leq \DG < (P_h-P_0)/p-\epsilon_t$, the bargaining proceeds with the task party generates a new price offer. 
\end{itemize}

In Case 3, by offering a feature bundle closest to $(P_h-P_0)/p$ to proceed the bargaining, the data party can maximize its payment if the task party accepts the bundle due to payment is monotonic to $\DG$. The existence of Case 4 eliminates the change of the data party deliberately offering less optimal bundles and waits for a higher quoted price of the task party. 

\begin{algorithm}

\caption{Perfect Performance Information Bargaining}
\algsetup{indent=1.5em}
\begin{algorithmic}[1]
\REQUIRE 
{The feature bundle under sale of the data party: $\mcF$; The performance gain of $F_i\in\mcF$: $\{\DG_i\}$}
\STATE \textcolor{gray}{/*On the side of the task party*/}
\STATE {\textbf{Initialize}: The task party targets a performance gain $\DG^*$; Initialize a base quoted price $(p^0,P^0_0,P^0_h)$ that satisfies $(P^0_h-P^0_0)/p=\DG^*$, where $p^0\leq u$ and $P^0_h\leq B$; $u$ is the utility rate of the task party; $B$ is the budget of the task party}
\WHILE {True}
\STATE \textcolor{gray}{/*On the side of both parties*/}
\STATE {Initialize base model parameters on the task party and the data party}
\STATE \textcolor{gray}{/*On the side of the task party*/}

\IF {is the first round of bargaining}
	\STATE {$(p,P_0,P_h)\leftarrow(p^0,P^0_0,P^0_h)$}
\ELSE
	\STATE $\DG\leftarrow$ conduct VFL with data party on the feature bundle $F$
	\IF {Case 4 in Section \ref{sec:per_termi}}
		\RETURN {Transaction fails}
	\ELSIF {Case 5 in Section \ref{sec:per_termi}}
		\RETURN {Transaction successes}
	\ELSE
		\STATE {Sample a finite set of quoted prices $P=\{(p^i,P^i_0,P^i_h)\}$, where $p^i \in (p^0,u]$, $P^i_h \in (P^0_h,B]$, $P^i_0=P^i_h-p^i\DG^*$, and $P^j_0\geq P^0_0$}
		\STATE {$(p,P_0,P_h) \leftarrow (p^j,P^j_0,P^j_h)=\min_{\mbp_j\in P} P^j_h$}
	\ENDIF
\ENDIF
\STATE \textcolor{gray}{/*On the side of the data party*/}
\STATE {$\mcF'\leftarrow$ Filter out feature bundles in $\mcF$ with reserved price higher than $(p,P_0)$}

\IF {Case 1 in Section \ref{sec:per_termi}}
	\RETURN {Transaction fails}
\ELSE
	\STATE {$F \leftarrow \min_{F_i\in\mcF} (P_h-P_0)/p-\DG_i$ while $(P_h-P_0)/p-\DG_i>0$}

	\IF {Case 2 in Section \ref{sec:per_termi}}
		\RETURN {Transaction successes}
	\ENDIF
\ENDIF	

\ENDWHILE
\end{algorithmic}
\label{alg:per}
\end{algorithm}

The bargaining process under perfect information setting is illustrated in Algorithm \ref{alg:per}.

\subsubsection{Perfect Performance Information: Bargaining Cost}
Note that cost exists in the proposed bargaining model: the third party can charge for query fees when the two players request model performance information; the VFL-related communication and training cost accumulates as the bargaining round continues. Therefore, it is necessary to take into account bargaining costs when discussing bargaining results. We use a cost function to describe the bargaining cost on each party, denoted as $C_t(T)$ and $C_d(T)$, where $C_t(\cdot)$ is the cost function on the task party, and $C_d(\cdot)$ is the cost function on the data party, which are of the bargaining round $T$. We assume that the bargaining cost takes effect on the final revenue received by the corresponding party and can be formulated in an additive manner. More specifically, for the task party, the final revenue it receives can be expressed as:
\begin{equation*}
R_t(T)=u \Delta G -\min\{\max\{P_0,P_0+p\Delta G\},P_h\} -C_t(T).
\end{equation*}
While for the data party, it is:
\begin{equation*}
R_d(T)=\min\{\max\{P_0,P_0+p\Delta G\},P_h\} -C_d(T).
\end{equation*}

In a bargaining model with the above-defined costs, Theorem \ref{thm:1} and Lemma \ref{le:1} still hold. This is because the cost revenue is independent of the bargaining result but only a function of the number of rounds. Therefore, the task party and the data party continue seeking to maximize the obtained revenue during a specific round of bargaining. However, as the bargaining cost monotonically increases to bargaining rounds, late-reached agreements can lead to the situation that the increased revenue does not cover the increased cost of bargaining, which makes the continuation of bargaining not necessary. In this case, the player would tend to reach the optimal solution as early as possible and terminate the negotiation when it is not worth continuing. 

Denote the current bargaining round as $T$, and the current quoted price as $(p,P_0,P_h)$. On the data party, denote that the performance gain of feature bundle $F_i$ as $\DG_i$. Suppose after selecting, the feature bundle offered by the data party gives performance gain $\DG$, while the feature bundle that gives performance gain $(P_h-P_0)/p$ is $F_j$, \ie $\DG_j=(P_h-P_0)/p$. We consider that the task party would accept the current offer instead of proceeding the bargaining if
\begin{equation}
\begin{aligned}
&P_0+p\DG_i -C_d(T)\geq  \\
&\max\{P_{0,l},P_0\}+\max\{p_l,p\}\DG_j-C_d(T+1) -\epsilon_{d,c},
\end{aligned}
\label{eq:temi_cost_data}
\end{equation}
where the left-hand-side (LHS) denotes the net revenue the data party can receive in the current round. As for the right-hand-side (RHS), $\max\{P_{0,l},P_0\}+\max\{p_l,p\}\DG_j$ denotes the lowest payment the data party can receive in the next round if it offers the feature bundle $F_i$. Equation \ref{eq:temi_cost_data} indicates that the data party should accept the current quoted price if a conservative estimation of the net revenue next round under is lower than the net revenue of the current round under tolerance $\epsilon_{d,c}$.

On the task party, denote the performance gain of the current round as $\DG$. We consider it would accept the current offer instead of proceeding the bargaining if
\begin{equation}
\begin{aligned}
&u\DG-(P_0+p\DG) -C_t(T)\geq \\
&u\frac{P_h-P_0}{p}- P_h-C_t(T+1) -\epsilon_{t,c},
\end{aligned}
\label{eq:temi_cost_task}
\end{equation}
where the LHS denotes the net profit of the current round. The RHS denotes the upper bound of revenue the task party can receive in the next round of bargaining. This can be derived from 1) as $(P_h-P_0)/p$ is the target performance gain of the task party, we have $(P_h-P_0)/p\geq \DG$; 2) since the quoted price of the next round should have a higher highest payment $P_h'$ to bargain for a higher $\DG$, we have $P_h \leq P'_h$; and 3) $C_t(T+1)>C_t(T)$. $\epsilon_{t,c}$ is a hyper-parameter controlling the tolerance of the gap. Equation \ref{eq:temi_cost_task} suggests that if the net profit of the current round is greater than the highest payment the task party could receive in the next round under tolerance $\epsilon_{t,c}$, it should accept the feature bundle.

Based on the above two formulations, we can modify the termination condition with bargaining cost For the data party,
\begin{itemize}
	\item Case 3 (with bargaining cost). If the select $F_i$ in $\mcF$ satisfies $p_{l,i}\leq p$ and $P_{l,i}\leq P_0$, while $(P_h-P_0)/p-\DG_i > \epsilon_d$: 1) if Equation \ref{eq:temi_cost_data} holds, the bargaining terminates by the task party with transaction successes; 2) else, the bargaining proceeds with the data party offering the feature bundle.
\end{itemize}

And for the task party,
\begin{itemize}
	\item Case 6 (with bargaining cost). If the received $\DG$ satisfies $P_0/(u-p) \geq \DG < (P_h-P_0)/p-\epsilon_t$: 1) if Equation \ref{eq:temi_cost_task} holds, the bargaining terminates by the task party with transaction successes; 2) else, the bargaining proceeds with the task party generates a new feature offer.
\end{itemize}

The modified two cases take into account that the bargaining processes with cost. We further propose that the two cases can be considered as a generalization of transaction success condition in Case 2 and Case 5 of Section \ref{sec:per_termi}. 

\begin{proposition}
When the bargaining cost is a constant value, Equation \ref{eq:temi_cost_data} is a reformulation of $(P_h-P_0)/p-\DG_i > \epsilon_d$ in Case 3 of Section \ref{sec:per_termi}. 
\end{proposition}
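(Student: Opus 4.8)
The plan is to specialize Equation~\ref{eq:temi_cost_data} to the constant-cost regime and show that, once the cost terms are eliminated and the two $\max$ operators are collapsed, the inequality it encodes reduces exactly to the Case~3 gap condition $(P_h-P_0)/p-\DG_i>\epsilon_d$ of Section~\ref{sec:per_termi}, with the two tolerances linked by $\epsilon_d=\epsilon_{d,c}/p$. The derivation is entirely algebraic: cancel the costs, simplify the maxima, substitute the defining relation $\DG_j=(P_h-P_0)/p$, and divide by the positive payment rate $p$.

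First I would impose that the cost is a constant value, i.e. $C_d(T)=C_d(T+1)$. Since these two terms enter Equation~\ref{eq:temi_cost_data} additively on opposite sides, they cancel identically, leaving a comparison purely between the current-round payment $P_0+p\DG_i$ and the conservative next-round payment $\max\{P_{0,l},P_0\}+\max\{p_l,p\}\DG_j$ shifted by $\epsilon_{d,c}$. This cancellation is exactly what detaches the decision from the round index $T$, so that the cost-augmented rule is governed by the same gap quantity that drives the cost-free Case~2/Case~3 split.

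Next I would collapse the two $\max$ operators. Within Case~3 the relevant bundle is affordable at the announced $(p,P_0)$, so its reserved price satisfies $p_l\le p$ and $P_{0,l}\le P_0$; hence $\max\{P_{0,l},P_0\}=P_0$ and $\max\{p_l,p\}=p$, and the next-round payment simplifies to $P_0+p\DG_j$. Substituting $\DG_j=(P_h-P_0)/p$ and dividing through by $p>0$ turns the comparison into one between the gap $(P_h-P_0)/p-\DG_i$ and the rescaled tolerance $\epsilon_{d,c}/p$. Setting $\epsilon_d:=\epsilon_{d,c}/p$, the branch of Equation~\ref{eq:temi_cost_data} that directs the data party to continue bargaining is precisely
\begin{equation*}
\frac{P_h-P_0}{p}-\DG_i>\epsilon_d,
\end{equation*}
which is verbatim the Case~3 condition, with the boundary $(P_h-P_0)/p-\DG_i=\epsilon_d$ shared by the two formulations. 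This establishes the asserted reformulation.

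I expect the main obstacle to be bookkeeping rather than any deep estimate: I must identify the reserved-price symbols $(p_l,P_{0,l})$ in Equation~\ref{eq:temi_cost_data} with those of the affordable bundle so that both maxima collapse cleanly, and I must track the roles of $\DG_j$ versus $\DG_i$ correctly through the substitution $\DG_j=(P_h-P_0)/p$. The one genuinely delicate point is the orientation of the inequality: because $p>0$, division preserves the sense, and aligning Equation~\ref{eq:temi_cost_data} with the continue-bargaining branch pins the conclusion to the strict ``$>$'' form of Case~3 and fixes the constant as $\epsilon_d=\epsilon_{d,c}/p$.
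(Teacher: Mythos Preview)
Your overall plan---cancel the constant costs and reduce Equation~\ref{eq:temi_cost_data} to a gap condition on $(P_h-P_0)/p-\DG_i$---matches the paper's, but your collapse of the two $\max$ operators is not justified and leads to a different (too simple) tolerance. The reserved-price pair $(p_l,P_{0,l})$ appearing in Equation~\ref{eq:temi_cost_data} is attached to the \emph{target} bundle $F_j$, the one the data party would offer in the next round with $\DG_j=(P_h-P_0)/p$, not to the currently affordable $F_i$. There is no reason for $F_j$ to satisfy $p_l\le p$ and $P_{0,l}\le P_0$ at the current quoted price; indeed, the whole point of continuing the bargaining is that the task party must raise $(p,P_0)$ before $F_j$ becomes affordable. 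Hence $\max\{P_{0,l},P_0\}=P_0$ and $\max\{p_l,p\}=p$ are not available, and your identification $\epsilon_d=\epsilon_{d,c}/p$ does not follow.

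The paper does not collapse the maxima at all. After cancelling the constant cost it simply rearranges the inequality into the form $(P_h-P_0)/p-\DG_i\le\epsilon_d$ and absorbs the surviving $\max$-terms into the new tolerance, obtaining
\[
\epsilon_d=\Big(\epsilon_{d,c}-\big(\max\{P_{0,l},P_0\}+\max\{p_l,p\}\tfrac{P_h-P_0}{p}-P_h\big)\Big)/p,
\]
which reduces to your $\epsilon_{d,c}/p$ only in the special case where both maxima are attained by $P_0$ and $p$. So the ``bookkeeping'' step you flagged as delicate is exactly where the argument goes wrong: you need to keep the $\max$ operators and let them live inside $\epsilon_d$, rather than assuming they disappear.
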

\begin{proof}
When the bargaining cost is constant, Equation \ref{eq:temi_cost_data} can be re-written as
\begin{equation*}
P_0+p\DG_i \geq
\max\{P_{0,l},P_0\}+\max\{p_l,p\}\DG_j -\epsilon_{d,c}.
\end{equation*}
For the RHS of the above formulation, we have
\begin{equation*}
\begin{aligned}
RHS &\geq P_0+p\DG_j -\epsilon_{d,c}=P_0+p\frac{P_h-P_0}{p} -\epsilon_{d,c}\\&=P_h-\epsilon_{d,c}.
\end{aligned}
\end{equation*}
We can then re-formulate Equation \ref{eq:temi_cost_data} as 
\begin{equation*}
\frac{P_h-P_0}{p}-\DG_i \leq \epsilon_{d},
\end{equation*}
where 
\begin{equation*}
\epsilon_{d}=\big(\epsilon_{d,c}-(\max\{P_{0,l},P_0\}+\max\{p_l,p\}\frac{P_h-P_0}{p}-(P_h))\big)/p.
\end{equation*}

\end{proof}

\begin{proposition}
When the bargaining cost is a constant value, Equation \ref{eq:temi_cost_task} is a reformulation of $\DG \geq (P_h-P_0)/p-\epsilon_t$ in Case 4 of Section \ref{sec:per_termi}. 
\end{proposition}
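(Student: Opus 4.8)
The plan is to mirror the argument used in the preceding proposition for the data party, turning the cost-aware acceptance rule of Equation~\ref{eq:temi_cost_task} into the task party's success condition. First I would invoke the constant-cost hypothesis to eliminate the cost terms: since $C_t(T)=C_t(T+1)$ when the bargaining cost is a constant, these two quantities cancel from opposite sides of Equation~\ref{eq:temi_cost_task}, leaving the purely algebraic inequality
\begin{equation*}
u\DG-(P_0+p\DG) \geq u\frac{P_h-P_0}{p}- P_h -\epsilon_{t,c}.
\end{equation*}

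Next I would collect the $\DG$ terms on the left and rewrite the right-hand side in a matching form. The left-hand side simplifies to $(u-p)\DG-P_0$. For the right-hand side, the key manipulation is to write $P_h=P_0+(P_h-P_0)$ and then factor, using $(P_h-P_0)=p\cdot(P_h-P_0)/p$, which yields
\begin{equation*}
u\frac{P_h-P_0}{p}-P_h=(u-p)\frac{P_h-P_0}{p}-P_0.
\end{equation*}
Substituting this back, the constant $-P_0$ now appears on both sides and cancels, reducing the inequality to $(u-p)\DG\geq (u-p)(P_h-P_0)/p-\epsilon_{t,c}$.

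Finally, I would divide through by $(u-p)$. The only point requiring care is the sign of this divisor: the step preserves the direction of the inequality precisely because of the self-rationality assumption $u>p$ stated earlier, so $u-p>0$. The result is
\begin{equation*}
\DG\geq \frac{P_h-P_0}{p}-\frac{\epsilon_{t,c}}{u-p},
\end{equation*}
which is exactly the task-party termination condition $\DG\geq (P_h-P_0)/p-\epsilon_t$ upon identifying the tolerance $\epsilon_t=\epsilon_{t,c}/(u-p)$.

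I do not expect a genuine structural obstacle here, since the statement is a direct algebraic reformulation rather than a combinatorial or existence argument; the work is essentially the cancellation of the two linear-in-$\DG$ sides. The two items worth flagging are (i) making explicit that dividing by $u-p$ is legitimate only under $u>p$, which is precisely where the self-rationality assumption is consumed, and (ii) confirming that the target being matched is the success condition $\DG\geq (P_h-P_0)/p-\epsilon_t$ (the task-party analogue of the data party's Case~2/Case~3 reformulation), so that the proposition shows the cost-aware rule collapses to this success condition once the cost is constant, completing the symmetry with the previous proposition.
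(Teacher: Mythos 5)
Your proposal is correct and follows essentially the same route as the paper's own proof: drop the constant cost terms, rearrange to $(u-p)\DG \geq (u-p)(P_h-P_0)/p - \epsilon_{t,c}$, and divide by $u-p$ to identify $\epsilon_t=\epsilon_{t,c}/(u-p)$. Your version is in fact slightly more careful than the paper's, which glosses over the $P_0$ cancellation (it says ``subtract $P_0$'' where the terms are actually moved and absorbed via $P_h = P_0 + p\cdot(P_h-P_0)/p$) and never explicitly notes that dividing by $u-p$ is valid because of the assumption $u>p$.
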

\begin{proof}
When the bargaining cost is constant, Equation \ref{eq:temi_cost_task} can be re-written as
\begin{equation*}
u\DG-(P_0+p\DG) \geq u\frac{P_h-P_0}{p}- P_h-\epsilon_{t,c}.
\end{equation*}
Subtract $P_0$ from both sides and combine terms with the same factors we have 
\begin{equation*}
(u-p)\DG\geq(u-p)\frac{P_h-P_0}{p}-\epsilon_{t,c}.
\end{equation*}
We can reformulate the above equation as 
\begin{equation*}
\DG \geq (P_h-P_0)/p-\epsilon_t,
\end{equation*}
where $\epsilon_t=\epsilon_{t,c}/(u-p)$.
\end{proof}

\subsection{Extending to Imperfect Performance Information}

We then extend the bargaining to imperfect performance information setting, where the performance gain of a feature bundle is not known in advance to both parties due to possible reasons. In this situation, the bargaining offers should be made in an estimation fashion. 

\subsubsection{Performance Estimation}
We suppose that there is an estimation function for each party. For the data party, we denote the estimation function as $g$, which takes feature bundles as input and predicts performance gain as output. Mathematically, the optimal parameters of the estimation function can be expressed as:

\begin{equation}
\theta_g^*=\min_{\theta_g}\mcL(g(F;\theta_g),\DG),
\label{eq:est_per_data}
\end{equation}
where $\theta_g$ is the model parameter of $g$, $\mcL$ evaluates the loss on the training instances, and $\DG$ is the groundtruth performance gain.

While for the task party, we propose to estimate the performance gain of quoted price instead since feature bundle related information is preserved by the data party. Denote the performance estimation function as $f$, it uses quoted price as input and predicts performance gain, meaning that the task party learns about how much performance gain it would obtain by offering a certain price. The objective of the network can be formulated as follows,
\begin{equation}
\theta_f^*=\min_{\theta_f}\mcL(f(p,P_0,P_h;\theta_f),\DG),
\label{eq:est_per_task}
\end{equation}
where $\theta_f$ is the model parameter of $f$.

Taking $f$ and $g$ as a whole, the learning process can be described as finding optimal $<(p,P_0,P_h),\DG>$ matching between the two parties, which is inconsistent with the perfect performance information setting. Accurate performance estimation is crucial for both the task and the data parties in such a situation, as it directly influences the net profit maximization of the task party and the monetary payoff optimization of the data party. However, the training of such a function faces several challenges. First, \textbf{training while bargaining}. The obtaining of $\theta^*_g$ and $\theta^*_f$ requires a set of ($(p,P_0,P_h)$, $F$, $\DG$) samples, which do not exist until VFL is performed between the two parties, while VFL courses only happen when bargaining happens between the two parties. It is important to simultaneously achieve both effective bargaining and effective training. Second, \textbf{simultaneous update}. Once a task party and a data party are paired for VFL, they both need to train their estimation functions so as to fit the current partner. Such a simultaneous training can introduce dynamics and may lead to convergence problems if there is no mutual understanding shared. To solve these problems, we propose the following offer-generating strategies on the side of the data party and the task party.

\subsubsection{Imperfect Performance Information: Bargaining Analysis On The Data Party}

Similar to the perfect performance information setting, the task party's feature bundle offer should be made based on the given quoted price received from the task party. Denoted as $(p,P_0,P_h)$, the maximum revenue that could be received by the data party is $P_h$. The data party's goal is to identify a feature bundle whose predicted performance gain is as close as possible to, but does not exceed, the quantity $(P_h - P_0)/p$. Similar to the setting of perfect performance information, this ensures that the selected bundle generates a performance gain that maximizes the payment within the price range of $(P_0, P_h)$. However, in the imperfect performance information setting, the data party does not know which feature bundle satisfies such a condition. Its bargaining strategy should thus be made based on estimation.

The data party first filters out feature bundles that have higher reserved prices than the given $(p,P_0)$. Then, for the rest of the bundles, it makes predictions on the performance gain of each bundle based on $g$. With the estimated performance, the data party selects the feature bundle that is expected to have the closest performance gain to $(P_h - P_0)/p$ as the bargaining offer. The estimation function is updated after VFL is performed between the two parties and the real performance gain is calculated. 


\subsubsection{Imperfect Performance Information: Bargaining Analysis On The Task Party}

For the task party, we propose the following bargaining strategy.

Initially, the task party targets certain performance gain and generates a large enough set of quoted prices that conform to the constraints outlined in Equation \ref{eq:cri}, while taking into account the budget and the utility rate, similar to in the perfect performance information setting. Subsequently, it employs $g$ to predict the performance gain associated with each sampled price in the set. When making a bargaining offer, the task party filters out prices whose predicted performance gain is no greater than or equal to $(P_h-P_0)/p-\epsilon_t$ and selects the one with the highest net profit to offer. If the filtered set is empty, it selects the one with the highest net profit. It's noteworthy that the task party's estimation function specifically considers quoted prices that conform to the constraints outlined in Equation \ref{eq:cri}. Consequently, it focuses the attention on prices that are consistent with the equilibrium price specified in Theorem \ref{thm:1} and Lemma \ref{le:1} when updating the performance estimation function. After the feature bundle offer is selected from the data party, the two parties perform VFL, and the obtained \textit{real performance gain} is used to update the performance estimation function $g$.

\subsubsection{Imperfect Performance Information: Termination Conditions}
\label{sec:imper_termi}
We denote the data party's estimated performance gain of feature bundle $F_i$ as $g(F_i)$, the quoted price offered by the task party as $(p,P_0,P_h)$, and the estimated performance gain as $f(p,P_0,P_h)$, and the real performance gain received by the task party as $\DG$. The bargaining under imperfect information condition can be divided into the following cases.

On the side of the data party,
\begin{itemize}
\item Case I: If there is no feature bundle $F_i$ in $\mcF$ that satisfies $p_{l,i}\leq p$ and $P_{l,i}\leq P_0$, the bargaining terminates by the data party with the transaction fails, unless Case VII is met.

\item Case II: If the selected $F_i$ satisfies $p_{l,i}\leq p$ and $P_{l,i}\leq P_0$, and 1) $(P_h-P_0)/p-g(F_i)\leq \epsilon_d$, or 2) $(P_h-P_0)/p>\max{\{g(F_i)\}}$, or 3) $(P_h-P_0)/p<\min{\{g(F_i)\}}$, the bargaining terminates by the data party with transaction successes and 1) $F_i$, or 2) $F_{max}$, or 3) $F_{min}$ offered, unless Case VII is met.

\item Case III: If the selected $F_i$ satisfies $p_{l,i}\leq p$ and $P_{l,i}\leq P_0$, while Case II is not met, the bargaining proceeds with $F_i$ offered.
\end{itemize}

On the side of the task party,
\begin{itemize}

\item Case IV: If the received $\DG$ satisfies $\DG < P_0/(u-p)$, the bargaining terminates by the task party with the transaction fails, unless Case VII is met.

\item Case V: If the received $\DG$ satisfies $\DG \geq (P_h-P_0)/p-\epsilon_t$, the bargaining terminates by the task party with transaction successes, unless Case VII is met.

\item Case VI: If the received $\DG$ satisfies $P_0/(u-p) \leq \DG < (P_h-P_0)/p-\epsilon_t$, the bargaining proceeds with the task party generates a new price offer. 

\end{itemize}

On the side of both parties,

\begin{itemize}
\item Case VII: If the bargaining is within the initial $N$ rounds, the bargaining proceeds with the data party operates on Case III, and the task party operates on Case VI.
\end{itemize}

The key difference between the termination condition of imperfect and perfect performance information settings includes 1) the action taken by the data party is based on estimated performance gain; 2) the transaction failing conditions, \ie Case 1 and Case 3 in Section \ref{sec:per_termi}, are relaxed in the first $N$ rounds of bargaining due to the estimation functions of the two parties are not well trained. Note that for the task party, the performance gain estimation function takes effect on price offer generating stage, while the termination conditions are based on the calculated real performance gain after receiving feature bundle and performing VFL. Moreover, despite in the setting of imperfect performance information, the task party lacks prior knowledge of the potential range of performance gains that could be offered by the data party, the practice of targeting a specific performance gain, as done in the case of perfect performance information, becomes still doable by introducing Case II. 

\subsection{Security Analysis}
In federated learning, there is a risk of malicious participants trying to manipulate the model or extract information from the training process. This risk is especially of concern in the context of VFL as it often requires direct communication between two parties without a server \cite{liu2022vertical}. Consequently, defenses and attacks are widely studied in the literature \cite{jiang2022comprehensive, luo2021feature, jin2021cafe, lyu2022privacy}. We here analyze possible threat models of the VFL market from two perspectives: 1) VFL training and inference and 2) the bargaining. \textit{FL Training and Inference.} The proposed VFL market is FL protocol-agnostic since it only takes the output (performance gain) of a VFL course as intermediate information for subsequent bargaining but does not affect and is not affected by the specific training protocol of VFL. This makes it adaptable to any security-related VFL training protocols and does not introduce a new threat model during the training and inference phase. \textit{Bargaining.} As performance gain is exchanged between the two parties, a party can access this information and conduct possible inference attacks on the other party's data. To eliminate this risk, encryption methods such as Homomorphic Encryption (HE) \cite{paillier1999public} and Secure Multi-party Computation (SMC) \cite{yao1982protocols} can be adopted for multiplication or comparing related operations. As the detailed design of security-preserving approaches is out of the scope of the paper, we omit it here. 


\begin{figure*} [t]
\subfigure[On Titanic]{
\centering
\includegraphics[width=0.19\linewidth]{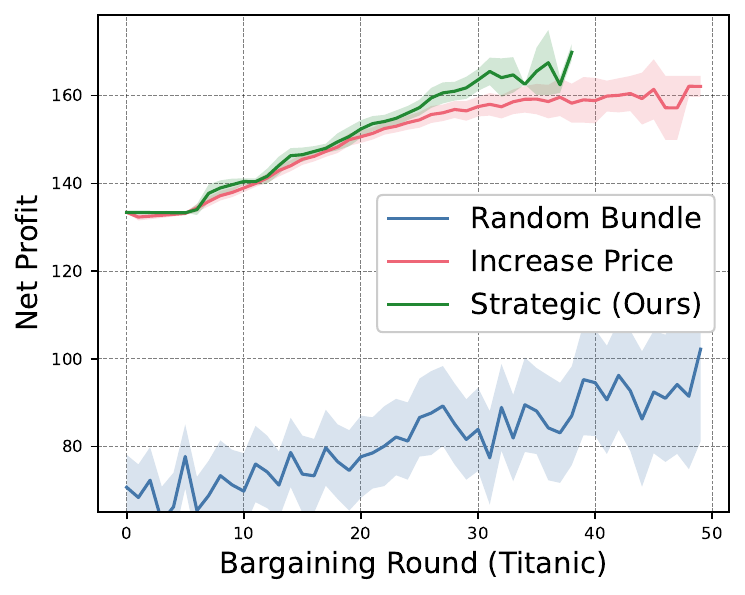}
}%
\subfigure[On Titanic]{
\centering
\includegraphics[width=0.19\linewidth]{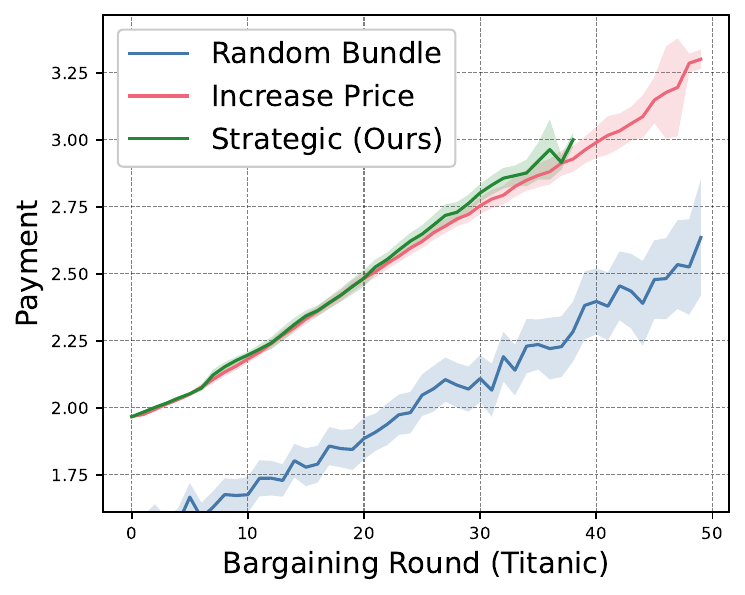}
}%
\subfigure[On Titanic]{
\centering
\includegraphics[width=0.19\linewidth]{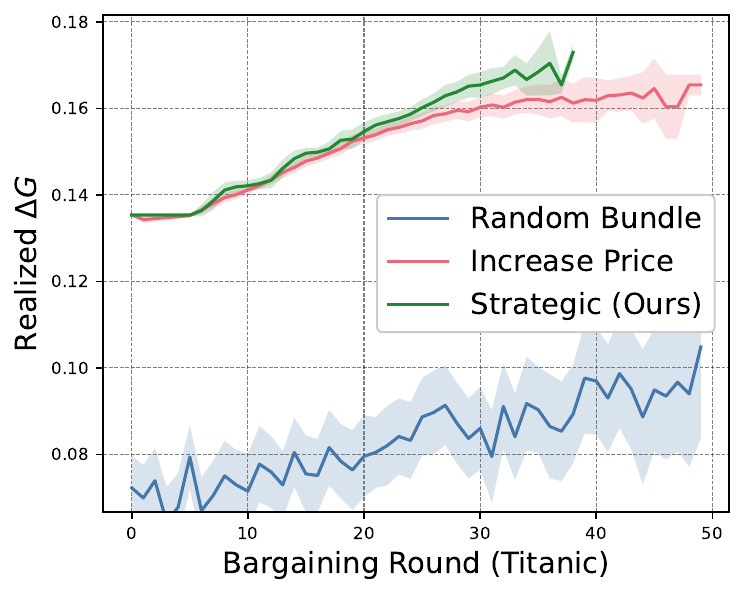}
}%
\subfigure[On Titanic]{
\centering
\includegraphics[width=0.19\linewidth]{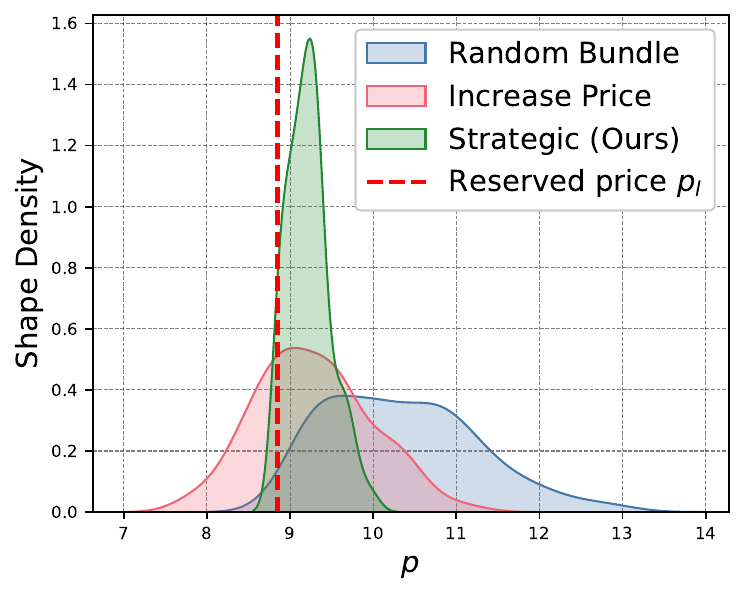}
}%
\subfigure[On Titanic]{
\centering
\includegraphics[width=0.19\linewidth]{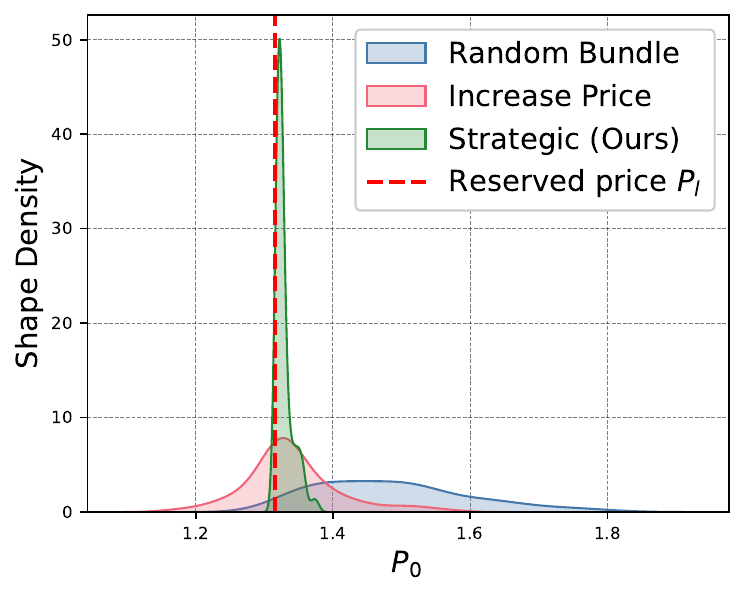}
}%
\centering

\subfigure[On Credit]{
\centering
\includegraphics[width=0.19\linewidth]{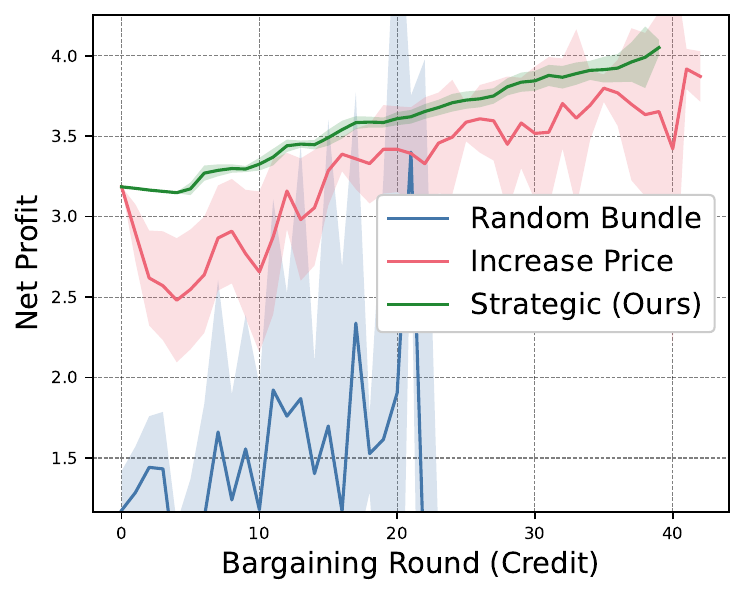}
}%
\subfigure[On Credit]{
\centering
\includegraphics[width=0.19\linewidth]{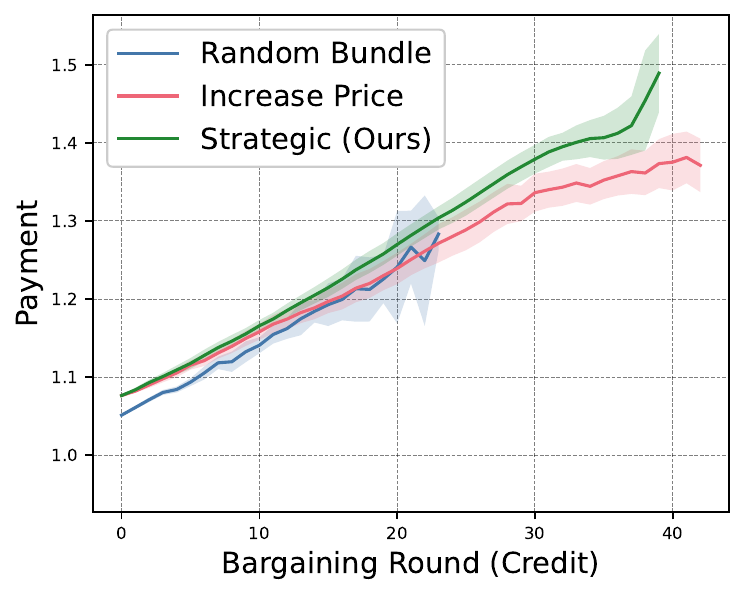}
}%
\subfigure[On Credit]{
\centering
\includegraphics[width=0.19\linewidth]{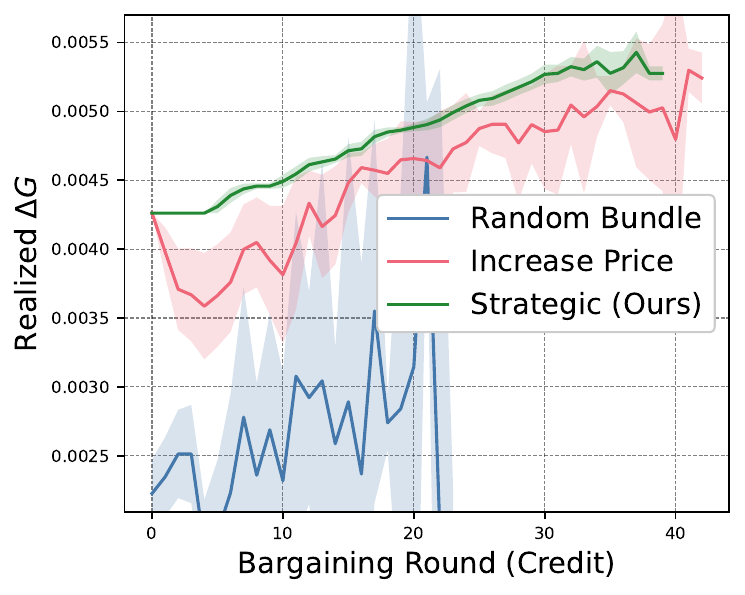}
}%
\subfigure[On Credit]{
\centering
\includegraphics[width=0.19\linewidth]{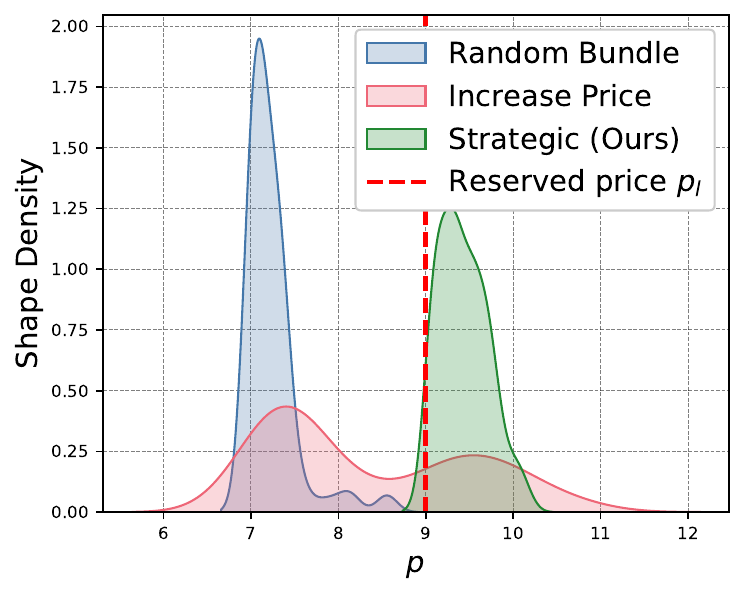}
}%
\subfigure[On Credit]{
\centering
\includegraphics[width=0.19\linewidth]{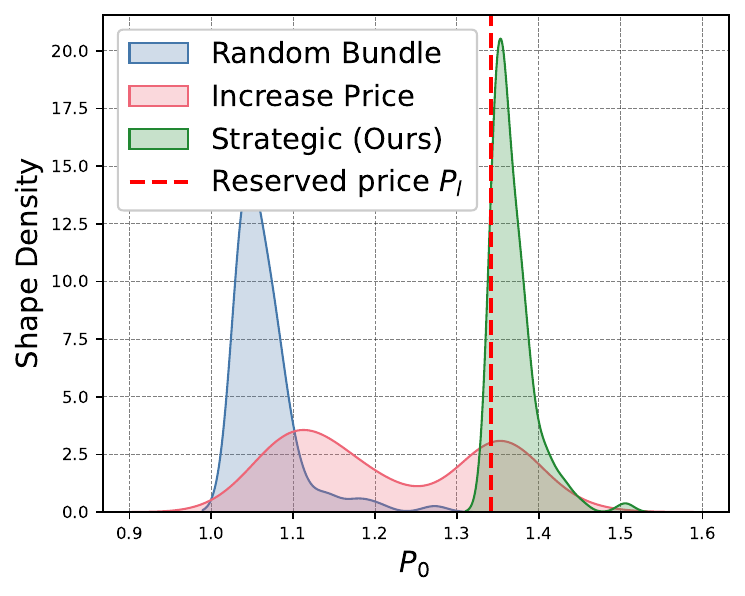}
}%
\centering

\subfigure[On Adult]{
\centering
\includegraphics[width=0.19\linewidth]{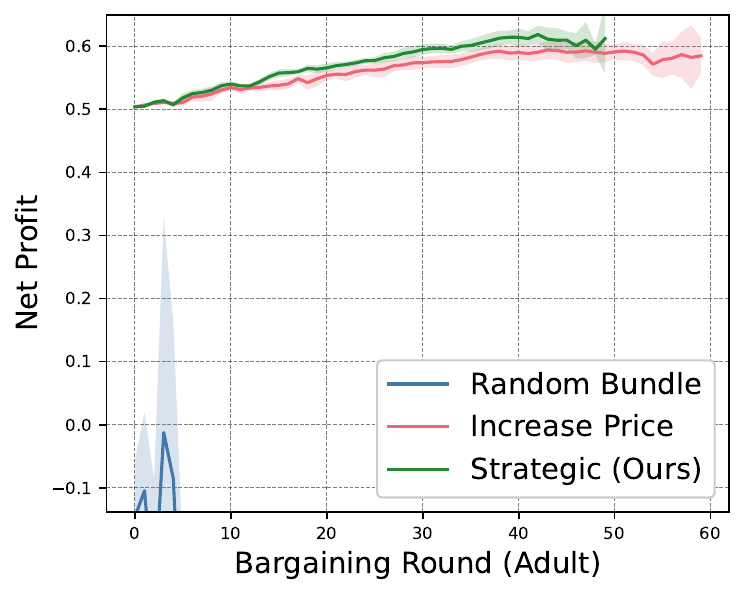}
}%
\subfigure[On Adult]{
\centering
\includegraphics[width=0.19\linewidth]{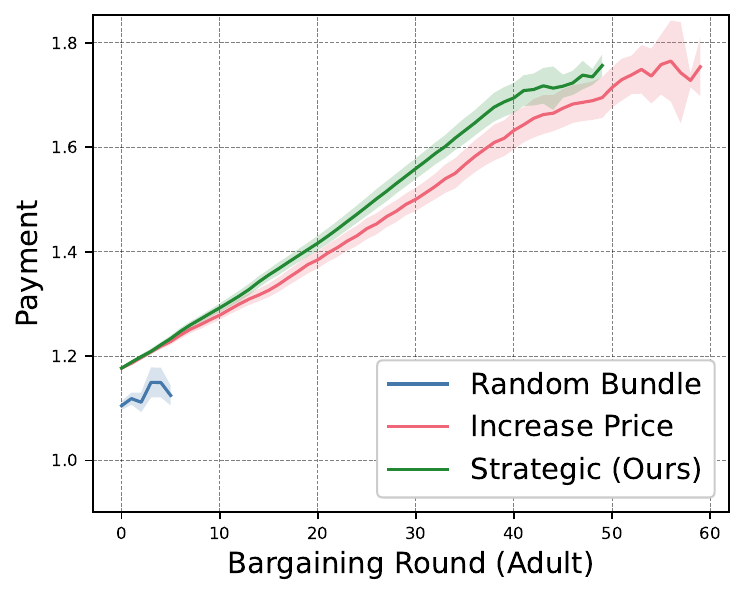}
}%
\subfigure[On Adult]{
\centering
\includegraphics[width=0.19\linewidth]{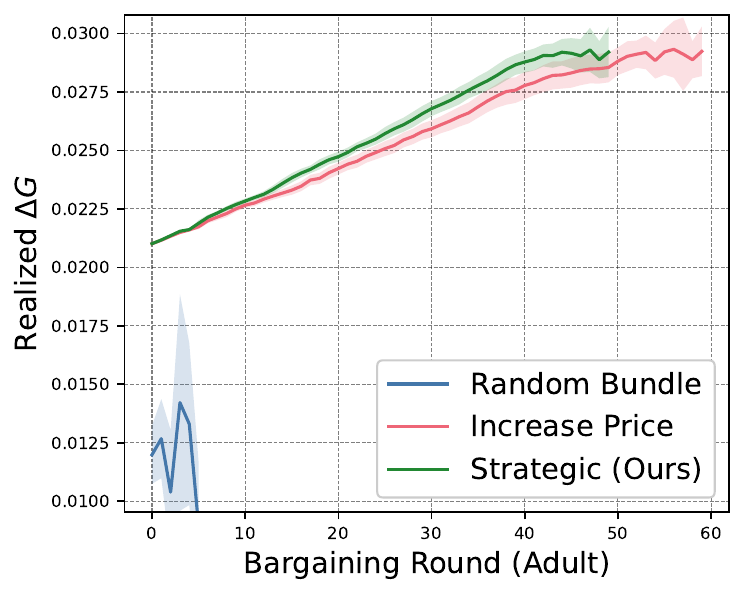}
}%
\subfigure[On Adult]{
\centering
\includegraphics[width=0.19\linewidth]{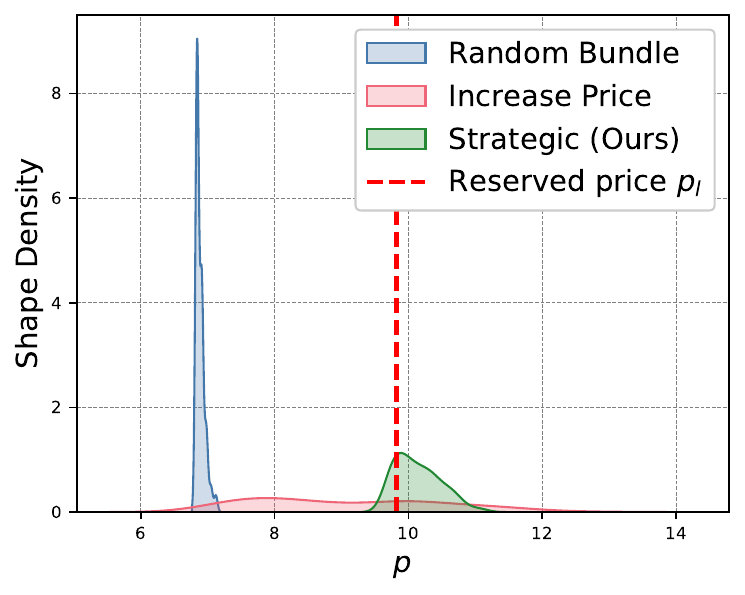}
}%
\subfigure[On Adult]{
\centering
\includegraphics[width=0.19\linewidth]{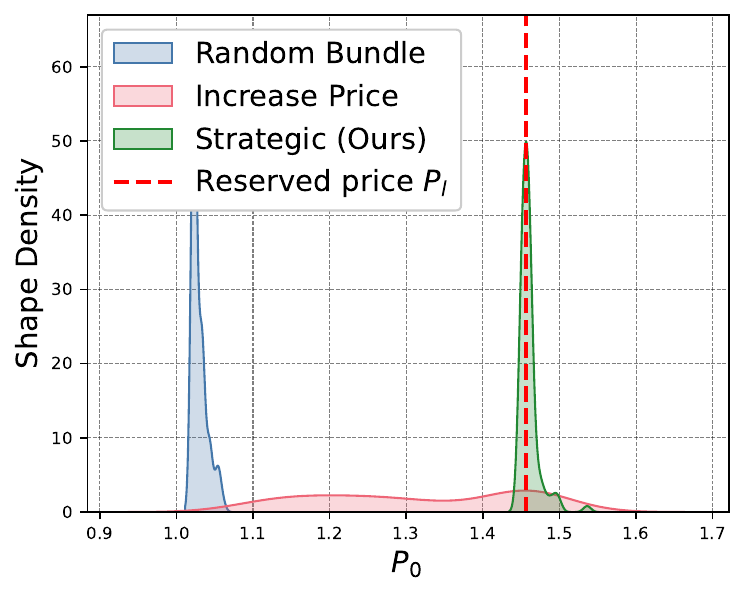}
}%
\centering
\caption{Bargaining results on the Titanic, Credit, and Adult datasets with Random Forest model. (a) Net profit \vs bargaining rounds. (b) Payment \vs bargaining rounds. (c) $\DG$ \vs bargaining rounds. (e) Final $p$ of each run of bargaining game \vs the reserved price $p_l$ of the data parties target feature bundle. (f) Final $P_0$ of each run of bargaining game \vs the reserved price $P_l$ of the data parties target feature bundle.}
\label{fig:per_false}
\end{figure*}
\section{Experiment}

\begin{figure*} [t]
\subfigure[On Titanic]{
\centering
\includegraphics[width=0.19\linewidth]{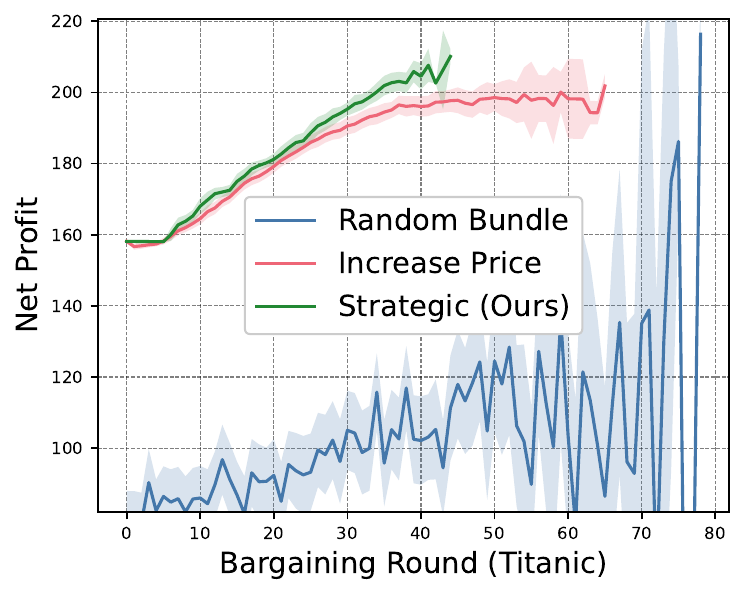}
}%
\subfigure[On Titanic]{
\centering
\includegraphics[width=0.19\linewidth]{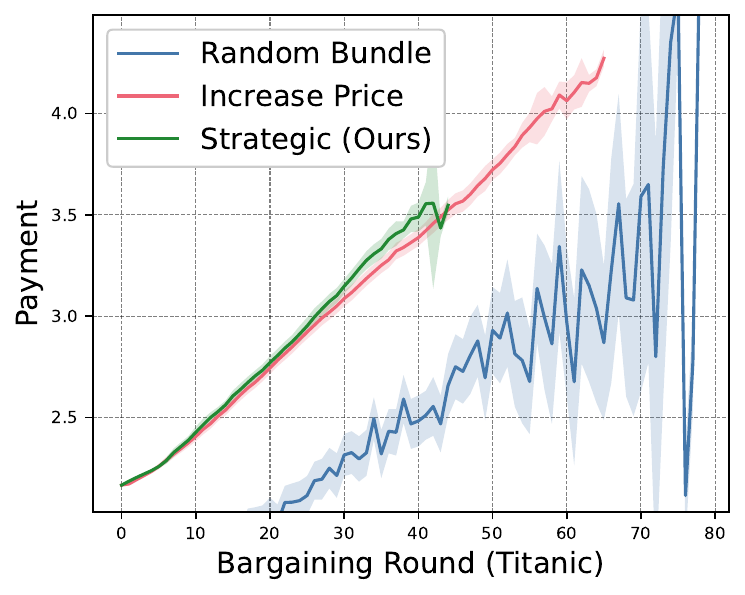}
}%
\subfigure[On Titanic]{
\centering
\includegraphics[width=0.19\linewidth]{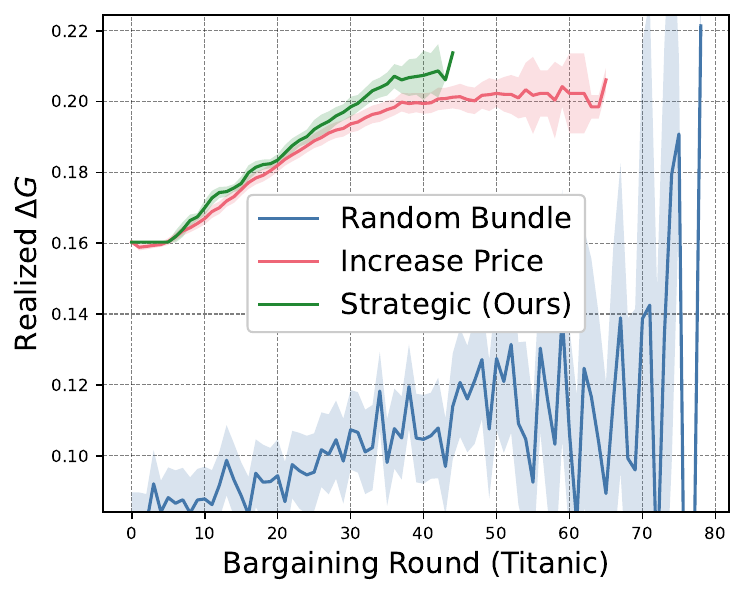}
}%
\subfigure[On Titanic]{
\centering
\includegraphics[width=0.19\linewidth]{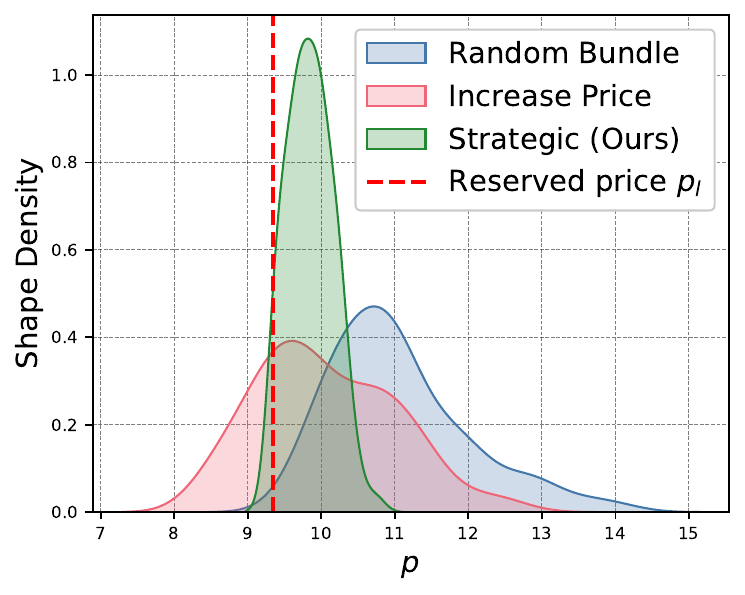}
}%
\subfigure[On Titanic]{
\centering
\includegraphics[width=0.19\linewidth]{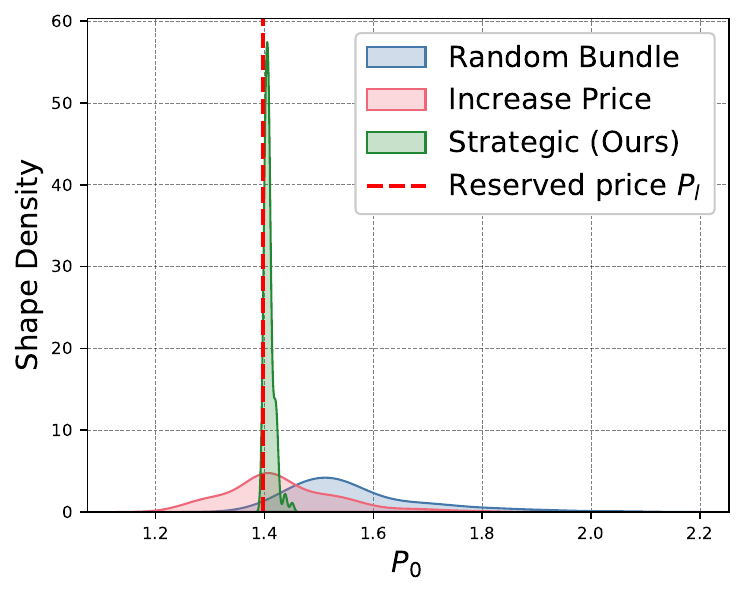}
}%
\centering

\subfigure[On Credit]{
\centering
\includegraphics[width=0.19\linewidth]{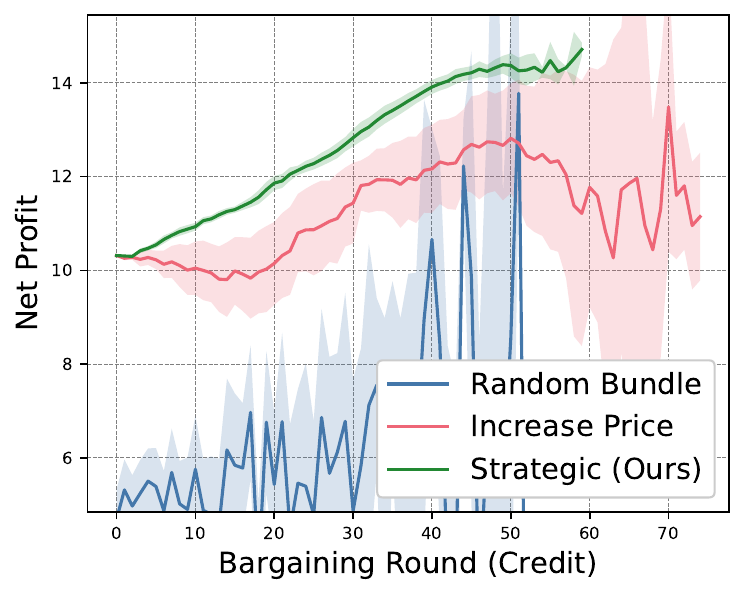}
}%
\subfigure[On Credit]{
\centering
\includegraphics[width=0.19\linewidth]{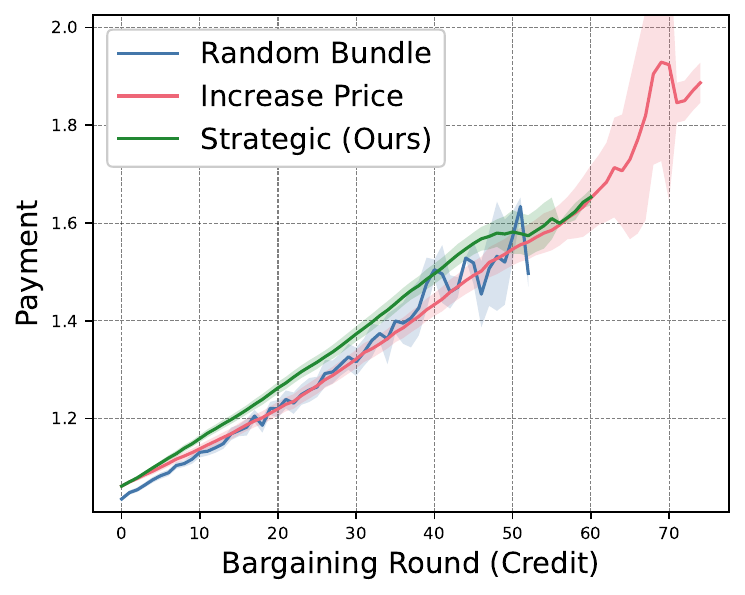}
}%
\subfigure[On Credit]{
\centering
\includegraphics[width=0.19\linewidth]{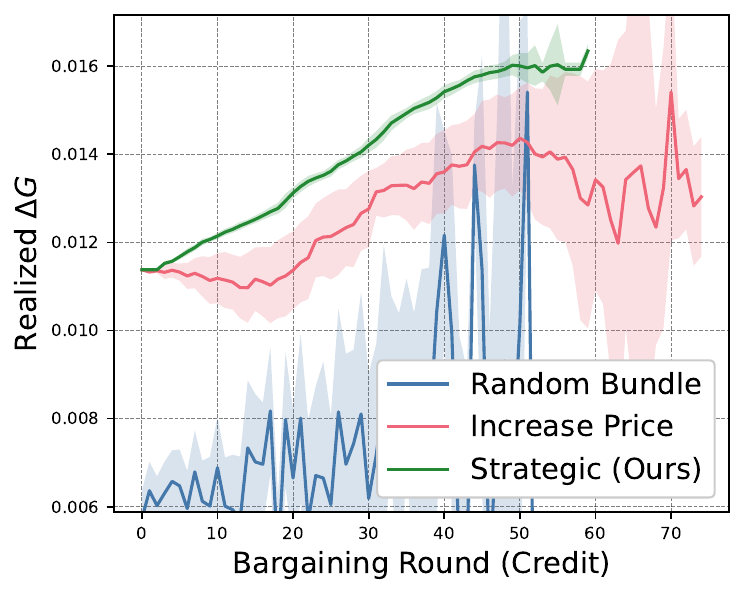}
}%
\subfigure[On Credit]{
\centering
\includegraphics[width=0.19\linewidth]{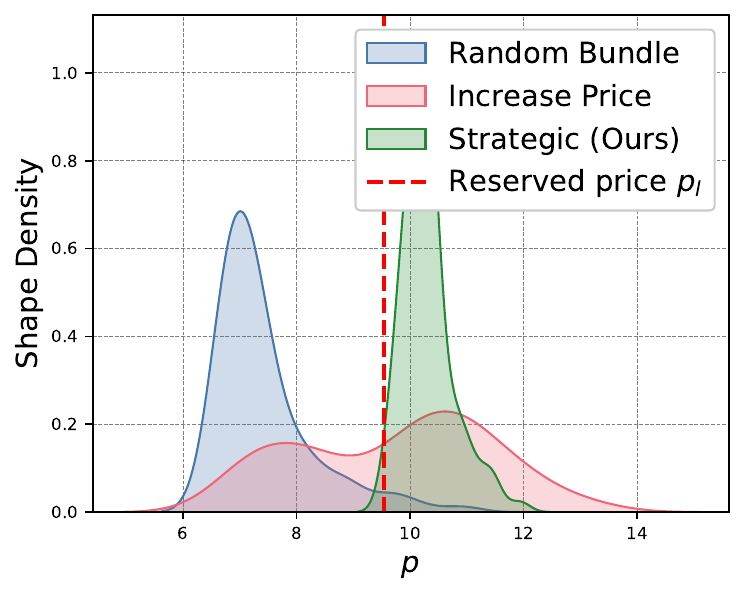}
}%
\subfigure[On Credit]{
\centering
\includegraphics[width=0.19\linewidth]{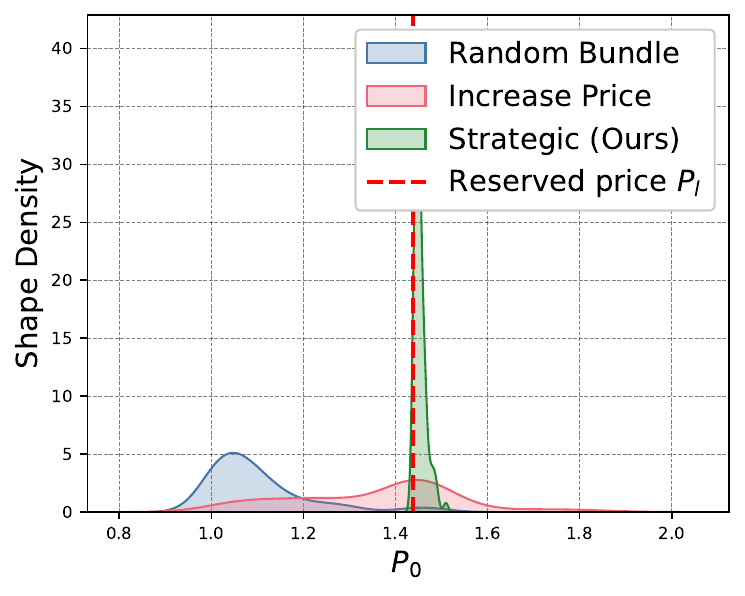}
}%
\centering

\subfigure[On Adult]{
\centering
\includegraphics[width=0.19\linewidth]{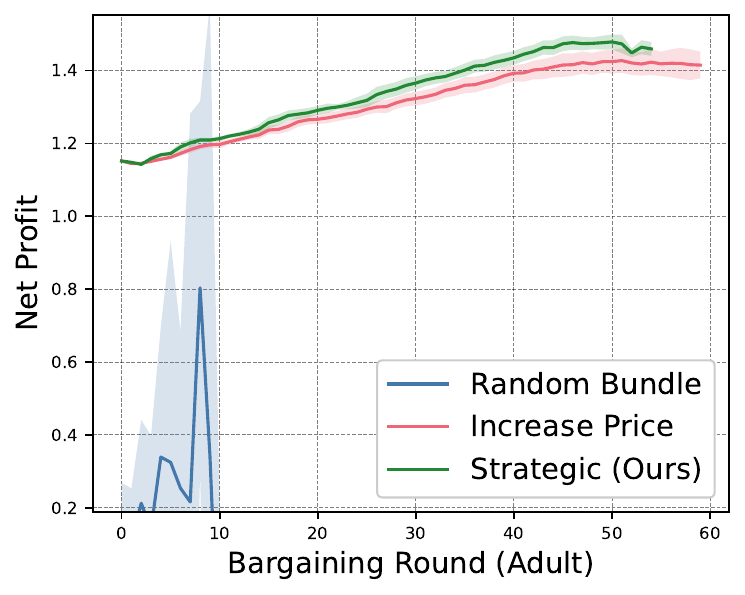}
}%
\subfigure[On Adult]{
\centering
\includegraphics[width=0.19\linewidth]{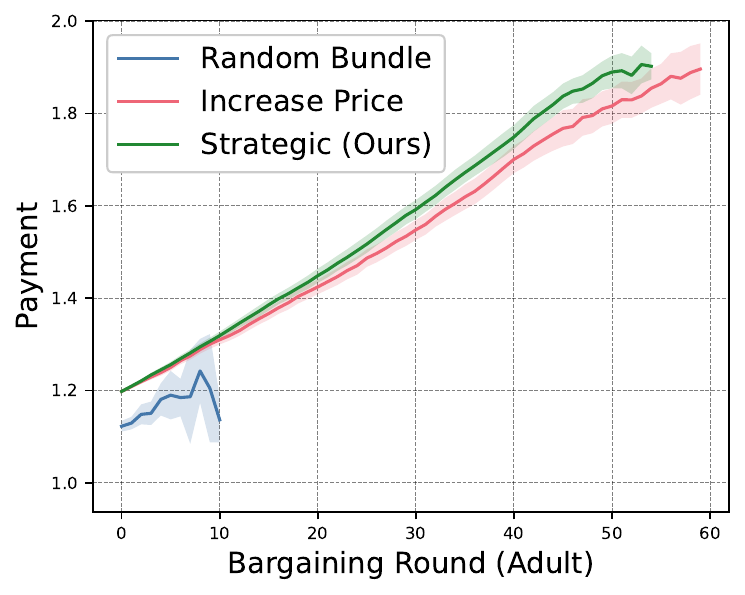}
}%
\subfigure[On Adult]{
\centering
\includegraphics[width=0.19\linewidth]{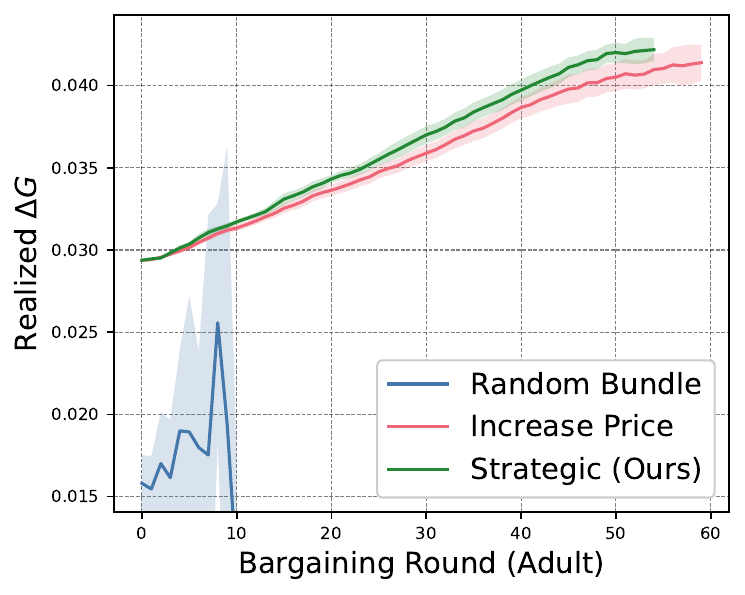}
}%
\subfigure[On Adult]{
\centering
\includegraphics[width=0.19\linewidth]{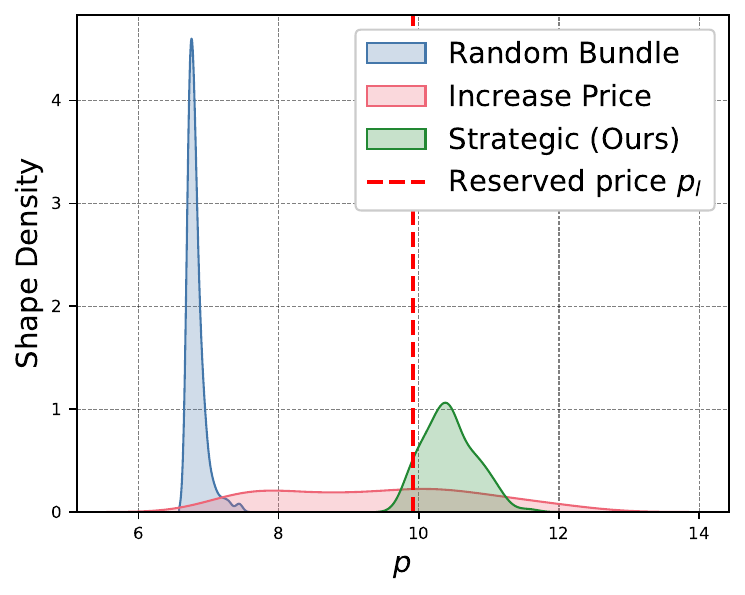}
}%
\subfigure[On Adult]{
\centering
\includegraphics[width=0.19\linewidth]{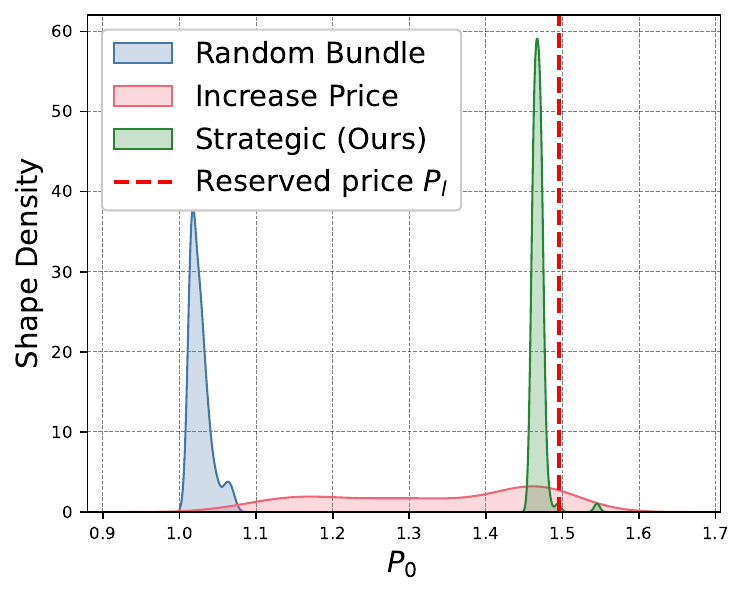}
}%
\centering
\caption{Bargaining results on the Titanic, Credit, and Adult datasets with 3-layer MLP model. Similar subfigure format as Figure \ref{fig:per_false}.}
\label{fig:per_true}
\end{figure*}

\subsection{Experiment Setup}
All experiments are implemented on 2 Intel(R) Xeon(R) Gold 6248R CPU @ 3.00GHz and 8 NVIDIA A100.

\subsubsection{Datasets and Metrics}
\begin{table}[t]
\centering
\caption{Dataset statistics.}
\fontsize{8}{9.5}\selectfont
\begin{tabular}{@{}llll@{}}
\specialrule{0.3mm}{0em}{0em} 
Datasets                              & Titanic & Credit & Adult \\ \hline
\# samples                            & 891     & 30000  & 48842 \\
original \# features (total)          & 11      & 25     & 14    \\
preprocessed \# features (task party) & 10      & 9      & 52    \\
preprocessed \# features (data party) & 19      & 21     & 36    \\ \specialrule{0.3mm}{0em}{0em} 
\end{tabular}
\label{tb:data}
\end{table}
We evaluate the proposed bargaining model on three real-world tabular datasets. 

\textit{Titanic} \footnote{\url{https://www.kaggle.com/competitions/titanic/data}}: The Titanic dataset contains information about passengers aboard the RMS Titanic, including 11 features such as age, sex, passenger class, and ticket fare. The dataset is commonly used for binary classification tasks, predicting whether a passenger survived or not based on the provided features.

\textit{Credit} \footnote{\url{https://www.kaggle.com/datasets/uciml/default-of-credit-card-clients-dataset}}: The dataset contains credit card client data from Taiwan between April 2005 and September 2005. It includes 25 variables such as ID, credit limit, gender, education, marital status, age, repayment status, bill statement amounts, and previous payment amounts. The prediction task is to classify if a client will default on his/her payment in the next month.

\textit{Adult} \footnote{\url{https://archive.ics.uci.edu/dataset/2/adult}}: The Adult dataset, \aka the ``Census Income" dataset, contains demographic and income-related information about individuals. It includes 14 features such as age, education level, marital status, and occupation. The goal is to predict whether an individual earns more than \$50,000 per year.

We convert the multi-class categorical features in the original datasets into indicator features and then split the features into task-party-owned and data-party-owned. Note that indicator features of the same original feature are on the same party. A summarization of the original and preprocessed datasets is listed in Table \ref{tb:data}.

For the three datasets, we calculate Accuracy as the performance of the base model.

\subsubsection{Implementation Details}
For the VFL-based prediction task, we evaluate two types of base model implementations: tree-based and deep neural network (DNN) based models. For the tree-based models, we consider the two parties together training a Random Forest model, with gini index as the splitting metric. For the DNN-based model, we consider the two parties collaboratively training a 3-layer multi-layer perception (MLP), with embedding dimensions 64 and 32. For the 3-layer MLP, the learning rate is set as 1e-2, and the batch size is set as 128 for the Titanic dataset, and 512 for the Credit dataset and the Adult dataset. In the isolated training of the task party, we set the number of epochs as 200, which ensures convergence. The maximum number of bargaining rounds is set as 500, and the bargaining is considered as terminated with transaction fails if it exceeds the threshold. Other key parameters regarding bargaining are empirically studied in subsequent sections.

\subsection{Main Results}

In order to study the effectiveness of the proposed bargaining model, we conduct experiments by comparing with two non-strategic variants of the model, named Increase Price and Random Bundle. In Increase Price, the task party does not follow the criterion in Equation \ref{eq:cri} but arbitrarily increases the quoted price in each round. In Random Bundle, the data party filters out feature bundles that have a higher reserved price than the quoted price, and then arbitrarily offers one feature bundle to the task party. We set the initial status, \ie the initial quoted price and target $\DG$ on the task party side, as the same for the three models and run the bargaining 100 times. The results of mean and 95\% confidence interval of net profit received by the task party, revenue reward received by the data party (\aka the payment made by the task party, denoted as Payment in the figures), and the $\DG$ over bargaining rounds are depicted in the left three columns of Figure \ref{fig:per_false} and Figure \ref{fig:per_true}, in which Figure \ref{fig:per_false} the base model for VFL is Random Forest and that in Figure \ref{fig:per_true} is 3-layer MLP. In the right two columns of Figure \ref{fig:per_false} and Figure \ref{fig:per_true}, we present the density functions that describe the distribution of the final quoted prices when the bargaining terminates compared with the reserved price of the data party. The proposed bargaining model is denoted as Strategic.

\begin{table*}[t]
\caption{Effect of bargaining cost.}
\fontsize{8}{8}\selectfont
\begin{tabular}{@{}llcccccccc@{}}
\toprule
\multicolumn{2}{c}{Dataset}                                                      & \multicolumn{8}{c}{Titanic}                                                                                                                           \\ \midrule
\multicolumn{2}{c|}{Termination Threshold}                                       & \multicolumn{4}{c}{{\ul $\epsilon=1e-3$}}                                & $\epsilon=1e-2$ & \multicolumn{3}{c}{}                                     \\
\multicolumn{2}{c|}{Bargaining Cost}                                             & Net Profit    & Payment         & Realized $\DG$ (1e-2) & C(T)           & Net Profit      & Payment         & Realized $\DG$ (1e-2) & C(T)           \\ \midrule
\multicolumn{2}{c|}{No cost}                                                     & 2.93$\pm$0.04 & 170.0$\pm$0.04  & 0.17$\pm$0.01         & -              & 2.70$\pm$0.03   & 162.71$\pm$0.03 & 0.17$\pm$0.01         & -              \\
\multicolumn{1}{l|}{\multirow{2}{*}{$C(T)=aT$}}  & \multicolumn{1}{l|}{$a=0.1$}  & 2.91$\pm$0.04 & 170.02$\pm$0.04 & 17.29$\pm$0.00        & 3.05$\pm$0.25  & 2.71$\pm$0.03   & 162.71$\pm$0.03 & 16.54$\pm$0.00        & 2.53$\pm$0.24  \\
\multicolumn{1}{l|}{}                            & \multicolumn{1}{l|}{$a=1$}    & 2.81$\pm$0.04 & 162.60$\pm$0.04 & 16.54$\pm$0.00        & 23.52$\pm$2.79 & 2.75$\pm$0.04   & 162.67$\pm$0.04 & 16.54$\pm$0.00        & 20.67$\pm$2.77 \\
\multicolumn{1}{l|}{\multirow{2}{*}{$C(T)=a^T$}} & \multicolumn{1}{l|}{$a=1.01$} & 2.98$\pm$0.05 & 169.96$\pm$0.05 & 17.29$\pm$0.00        & 1.40$\pm$0.04  & 2.72$\pm$0.03   & 162.69$\pm$0.03 & 16.54$\pm$0.00        & 1.29$\pm$0.03  \\
\multicolumn{1}{l|}{}                            & \multicolumn{1}{l|}{$a=1.1$}  & 2.76$\pm$0.02 & 162.58$\pm$0.75 & 16.53$\pm$0.08        & 13.73$\pm$2.16 & 2.70$\pm$0.02   & 162.72$\pm$0.02 & 16.54$\pm$0.00        & 9.84$\pm$1.74  \\ \midrule
\multicolumn{2}{c|}{Dataset}                                                     & \multicolumn{8}{c}{Credit}                                                                                                                            \\ \midrule
\multicolumn{2}{c|}{Termination threshold}                                       & \multicolumn{4}{c}{{\ul $\epsilon=1e-5$}}                                & \multicolumn{4}{c}{$\epsilon=1e-4$}                                        \\
\multicolumn{2}{c|}{Bargaining cost}                                             & Net Profit    & Payment         & Realized $\DG$ (1e-2) & C(T)           & Net Profit      & Payment         & Realized $\DG$ (1e-2) & C(T)           \\ \midrule
\multicolumn{2}{c|}{No cost}                                                     & 1.42$\pm$0.02 & 4.06$\pm$0.02   & 0.55$\pm$0.01         & -              & 1.42$\pm$0.02   & 4.06$\pm$0.02   & 0.55$\pm$0.01         & -              \\
\multicolumn{1}{l|}{\multirow{2}{*}{$C(T)=aT$}}  & \multicolumn{1}{l|}{$a=0.1$}  & 1.40$\pm$0.01 & 3.92$\pm$0.08   & 0.53$\pm$0.01         & 3.31$\pm$0.27  & 1.41$\pm$0.00   & 3.90$\pm$0.07   & 0.53$\pm$0.01         & 3.25$\pm$0.29  \\
\multicolumn{1}{l|}{}                            & \multicolumn{1}{l|}{$a=1$}    & 1.20$\pm$0.01 & 3.47$\pm$0.01   & 0.47$\pm$0.00         & 9.28$\pm$1.69  & 1.19$\pm$0.02   & 3.48$\pm$0.02   & 0.47$\pm$0.01         & 7.66$\pm$1.25  \\ \cmidrule(r){1-2}
\multicolumn{1}{l|}{\multirow{2}{*}{$C(T)=a^T$}} & \multicolumn{1}{l|}{$a=1.01$} & 1.43$\pm$0.02 & 4.05$\pm$0.02   & 0.55$\pm$0.00         & 1.39$\pm$0.04  & 1.43$\pm$0.02   & 4.05$\pm$0.02   & 0.55$\pm$0.01         & 1.41$\pm$0.04  \\
\multicolumn{1}{l|}{}                            & \multicolumn{1}{l|}{$a=1.1$}  & 1.25$\pm$0.01 & 3.62$\pm$0.04   & 0.49$\pm$0.00         & 6.28$\pm$0.48  & 1.26$\pm$0.02   & 3.61$\pm$0.03   & 0.49$\pm$0.01         & 6.61$\pm$0.73  \\ \midrule
\multicolumn{2}{c|}{Dataset}                                                     & \multicolumn{8}{c}{Adult}                                                                                                                             \\ \midrule
\multicolumn{2}{c|}{Termination Threshold}                                       & \multicolumn{4}{c}{$\epsilon=1e-4$}                                      & \multicolumn{4}{c}{{\ul $\epsilon=5e-4$}}                                  \\
\multicolumn{2}{c|}{Bargaining Cost}                                             & Net Profit    & Payment         & Realized $\DG$ (1e-2) & C(T)           & Net Profit      & Payment         & Realized $\DG$ (1e-2) & C(T)           \\ \midrule
\multicolumn{2}{c|}{No cost}                                                     & 1.80$\pm$0.02 & 0.61$\pm$0.02   & 3.01$\pm$0.01         & -              & 1.77$\pm$0.01   & 0.62$\pm$0.01   & 2.98$\pm$0.01         & -              \\
\multicolumn{1}{l|}{\multirow{2}{*}{$C(T)=aT$}}  & \multicolumn{1}{l|}{$a=0.1$}  & 1.70$\pm$0.02 & 0.61$\pm$0.02   & 2.90$\pm$0.01         & 3.87$\pm$0.25  & 1.68$\pm$0.01   & 0.63$\pm$0.01   & 2.89$\pm$0.00         & 4.02$\pm$0.26  \\
\multicolumn{1}{l|}{}                            & \multicolumn{1}{l|}{$a=1$}    & 1.16$\pm$0.00 & 0.52$\pm$0.00   & 2.10$\pm$0.00         & 1.00$\pm$0.00  & 1.17$\pm$0.00   & 0.51$\pm$0.00   & 2.10$\pm$0.00         & 1.00$\pm$0.00  \\ \cmidrule(r){1-2}
\multicolumn{1}{l|}{\multirow{2}{*}{$C(T)=a^T$}} & \multicolumn{1}{l|}{$a=1.01$} & 1.80$\pm$0.02 & 0.61$\pm$0.02   & 3.01$\pm$0.00         & 1.62$\pm$0.06  & 1.78$\pm$0.01   & 0.61$\pm$0.01   & 2.98$\pm$0.01         & 1.58$\pm$0.06  \\
\multicolumn{1}{l|}{}                            & $a=1.1$                       & 1.36$\pm$0.02 & 0.55$\pm$0.01   & 2.39$\pm$0.02         & 4.77$\pm$0.24  & 1.35$\pm$0.02   & 0.56$\pm$0.01   & 2.38$\pm$0.03         & 4.79$\pm$0.26  \\ \bottomrule
\end{tabular}
\label{tb:cost}
\end{table*}

The left three columns of the figures show that the proposed model outperforms the non-strategic variants in terms of net profit received by the task party and realized $\DG$, while the reward received by the task party is comparable or reasonable lower than the Increase Price, indicating that the model is more effective in achieving a win-win outcome for both parties. It is worth mentioning that the two parties reach agreements faster under the proposed approach compared with the Increase Price method, with a more robust trend, which suggests that targeting the turning point of the payment and net profit function is an effective strategy for reaching equilibrium. Moreover, the Random Bundle method performs poorly in almost all cases, with early terminations that result from unsuccessful transactions. This is due to the fact that feature bundles with low performance gain can be offered by the task party, in which case the task party terminates the bargaining for the violation of Case 4 in Section \ref{sec:per_termi}. 

From the right two columns of the figures, we can observe that the proposed model achieves a better alignment between the final quoted prices and the reserved price of the data party, indicating that the proposed model is more effective in achieving an outcome that is acceptable by both parties. While for Increase Price, as there is no constraint on the price offer, there are cases that the task party over-pays a feature bundle, \ie higher than the reserved price of the data party.

\subsection{Effect of Bargaining Cost}
In this section, we study how the bargaining cost affects the results of the proposed approach. We are interested in two situations: when the cost is linear to the bargaining rounds and when it is exponential to the bargaining rounds, \ie $C(T)=aT$ and 2) $C(T)=a^T$. We vary the cost factor $a$ and termination condition related hyper-parameter $\epsilon$ in Section \ref{sec:per_termi} to further analyze the effect of bargaining cost. For simplicity, we set $10*C_t(T)=10*C_d(T)=C(T)$ for the Credit and Adult dataset. We use $\epsilon_d=\epsilon_t=\epsilon$ for both datasets. We employ different $\epsilon$ on different datasets, the one underlined is set as the default value. We run the bargaining 100 times and report the mean and standard deviation of the net profit, payment, and cost of the final bargaining state. Note that the net profit and payment are calculated as the revenue before minus cost. The results under Random Forest base model are illustrated in Table \ref{tb:cost}, and those of the original perfect performance information setting are also included for comparison. The initial states of the current evaluation are the same for all runs. 

Table \ref{tb:cost} shows that when cost is introduced to the bargaining model, the net profit, payment, and realized $\DG$ are all generally smaller compared to the No Cost version, which indicates the two parties tend to reach a less optimal equilibrium. Such a trend is more significant when the cost increases faster (\ie larger $a$), which makes long-term negotiation less tolerable. $\epsilon$ also plays an important role in determining the final state of bargaining. It can be observed that a smaller $\epsilon$ generally leads to higher revenue for both parties, as it indicates the final realized $\DG$ is closer to the target one. However, this also leads to higher bargaining costs, which might offset the advantage of increased revenue. Therefore, as a player, it is important to consider the trade-off between bargaining efficiency and better equilibrium during decision making in the feature bargaining.

\begin{table}[]
\centering
\caption{Bargaining results under imperfect performance information.}
\fontsize{6.5}{7.5}\selectfont
\setlength{\tabcolsep}{0.6mm}{
\begin{tabular}{l|llllll}
\specialrule{0.3mm}{0em}{0em} 
Base Model                                           & \multicolumn{6}{c}{Random Forest}                                                                                                                                                       \\ \hline
Dataset                                              & \multicolumn{2}{c}{Titanic}                                 & \multicolumn{2}{c}{Credit}                                  & \multicolumn{2}{c}{Adult}                                   \\ \hline
Setting                                              & \multicolumn{1}{c}{Imperfect} & \multicolumn{1}{c}{Perfect} & \multicolumn{1}{c}{Imperfect} & \multicolumn{1}{c}{Perfect} & \multicolumn{1}{c}{Imperfect} & \multicolumn{1}{c}{Perfect} \\ \hline
$p$                                                  & 9.62$\pm$3.68                 & 9.22$\pm$0.26               & 12.90$\pm$0.62                & 9.43$\pm$0.28               & 12.53$\pm$1.27                & 10.14$\pm$0.34              \\
$P_0$                                                & 1.28$\pm$0.55                 & 1.33$\pm$0.01               & 1.41$\pm$0.95                 & 1.37$\pm$0.03               & 1.29$\pm$0.70                 & 1.46$\pm$0.01               \\
$P_h$                                                & 0.76$\pm$3.68                 & 0.37$\pm$0.26               & 3.88$\pm$0.62                 & 0.43$\pm$0.28               & 2.70$\pm$1.27                 & 0.32$\pm$0.34               \\
$\Delta p$                                           & -0.04$\pm$0.55                & 0.01$\pm$0.01               & 0.06$\pm$0.09                 & 0.03$\pm$0.03               & -0.19$\pm$0.70                & 0.00$\pm$0.01               \\
$\Delta P_0$                                         & 2.94$\pm$0.40                 & 2.92$\pm$0.05               & 1.49$\pm$0.95                 & 1.42$\pm$0.03               & 1.67$\pm$0.69                 & 1.77$\pm$0.01               \\
$\Delta G$                                           & 0.12$\pm$0.05                 & 0.17$\pm$0.01               & 0.01$\pm$0.01                 & 0.01$\pm$0.01               & 0.01$\pm$0.01                 & 0.03$\pm$0.00               \\
\begin{tabular}[c]{@{}l@{}}Net  Profit\end{tabular} & 117.81$\pm$46.70              & 170.01$\pm$0.05             & 2.03$\pm$0.05                 & 4.06$\pm$0.03               & 0.33$\pm$0.21                 & 0.63$\pm$0.02               \\
Payment                                              & 2.29$\pm$0.97                 & 2.92$\pm$0.05               & 0.64$\pm$0.03                 & 1.42$\pm$0.03               & 0.55$\pm$1.00                 & 1.76$\pm$0.01               \\ \hline
Base Model                                           & \multicolumn{6}{c}{3-layer MLP}                                                                                                                                                         \\ \hline
Dataset                                              & \multicolumn{2}{c}{Titanic}                                 & \multicolumn{2}{c}{Credit}                                  & \multicolumn{2}{c}{Adult}                                   \\ \hline
Setting                                              & \multicolumn{1}{c}{Imperfect} & \multicolumn{1}{c}{Perfect} & \multicolumn{1}{c}{Imperfect} & \multicolumn{1}{c}{Perfect} & \multicolumn{1}{c}{Imperfect} & \multicolumn{1}{c}{Perfect} \\ \hline
$p$                                                  & 11.24$\pm$3.14                & 9.85$\pm$0.31               & 12.88$\pm$2.10                & 10.30$\pm$0.44              & 12.41$\pm$1.95                & 10.44$\pm$0.37              \\
$P_0$                                                & 1.24$\pm$0.37                 & 1.41$\pm$0.01               & 1.50$\pm$0.76                 & 1.45$\pm$0.01               & 1.30$\pm$0.69                 & 1.47$\pm$0.01               \\
$P_h$                                                & 1.91$\pm$0.41                 & 0.51$\pm$0.31               & 2.35$\pm$1.10                 & 0.76$\pm$0.44               & 1.49$\pm$0.95                 & 0.53$\pm$0.37               \\
$\Delta p$                                           & -0.46$\pm$0.77                & 0.01$\pm$0.01               & 0.06$\pm$0.76                 & 0.02$\pm$0.01               & -0.19$\pm$0.19                & -0.03$\pm$0.01              \\
$\Delta P_0$                                         & 3.56$\pm$0.55                 & 3.51$\pm$0.07               & 1.71$\pm$0.76                 & 1.62$\pm$0.01               & 1.88$\pm$0.65                 & 1.92$\pm$0.02               \\
$\Delta G$                                           & 0.12$\pm$0.09                 & 0.21$\pm$0.00               & 0.01$\pm$0.00                 & 0.02$\pm$0.00               & 0.02$\pm$0.01                 & 0.04$\pm$0.00               \\
\begin{tabular}[c]{@{}l@{}}Net Profit\end{tabular} & 144.46$\pm$94.42              & 210.23$\pm$0.07             & 7.41$\pm$0.09                 & 14.72$\pm$0.01              & 1.36$\pm$0.12                 & 1.50$\pm$0.02               \\
Payment                                              & 2.04$\pm$1.72                 & 3.51$\pm$0.07               & 1.27$\pm$0.03                 & 1.62$\pm$0.01               & 1.26$\pm$0.07                 & 1.91$\pm$0.02               \\ \specialrule{0.3mm}{0em}{0em} 
\end{tabular}}
\label{tb:imp}
\end{table}

\subsection{Imperfect Performance Information Bargaining Analysis}

In this section, we conduct experiments under imperfect performance information setting. For the task party, a 3-layer MLP with embedding dimensions 64, 32, 16 are used for the estimation of $\DG$. For the data party, we first embed each singular feature with the nn.Embedding layer \footnote{\url{https://pytorch.org/docs/stable/generated/torch.nn.Embedding.html}} in PyTorch and then take the average of each feature variable's embedding as the representation of the whole feature bundle, which is then used as input of the performance gain estimation network. The estimation network on the task party follows the same structure as the task party. We set the initial 100 rounds as the exploration rounds of bargaining, \ie $N=100$, during which the bargaining will not fail under any cases.

\begin{figure} [t]
\subfigure[Random Forest (Titanic)]{
\centering
\includegraphics[width=0.46\linewidth]{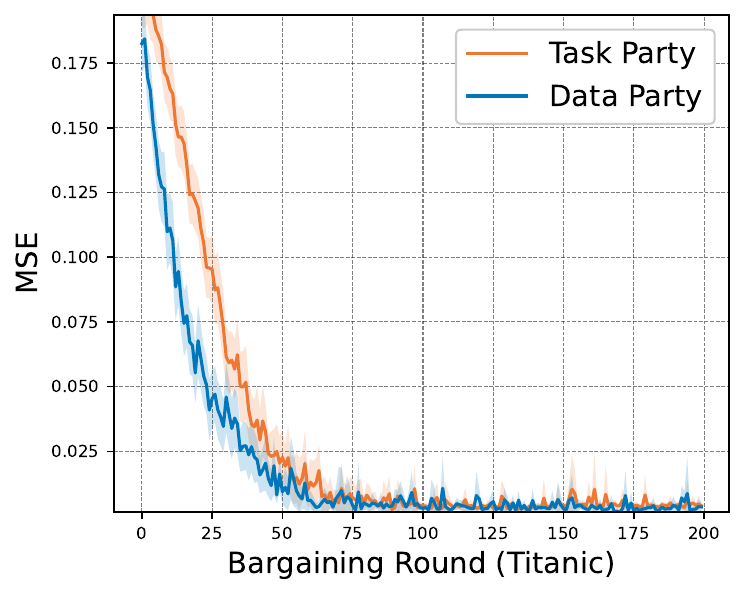}
}%
\subfigure[3-Layer MLP (Titanic) ]{
\centering
\includegraphics[width=0.46\linewidth]{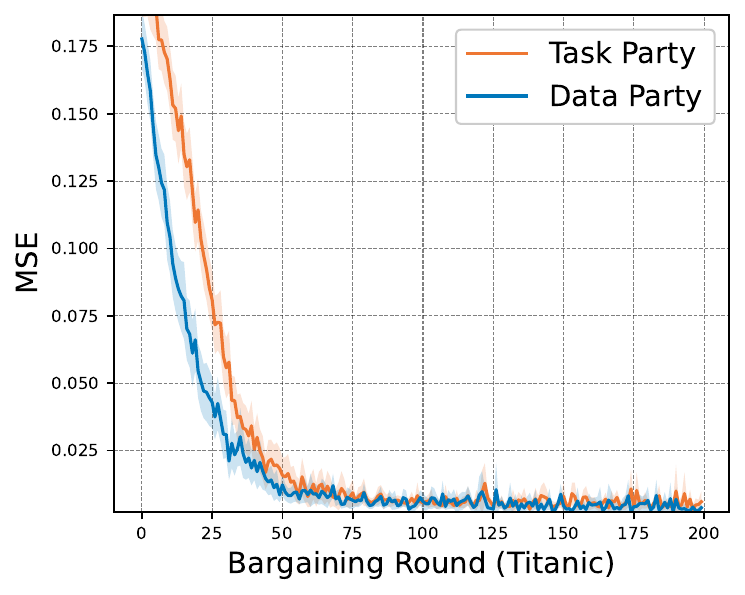}
}%

\subfigure[Random Forest (Credit)]{
\centering
\includegraphics[width=0.46\linewidth]{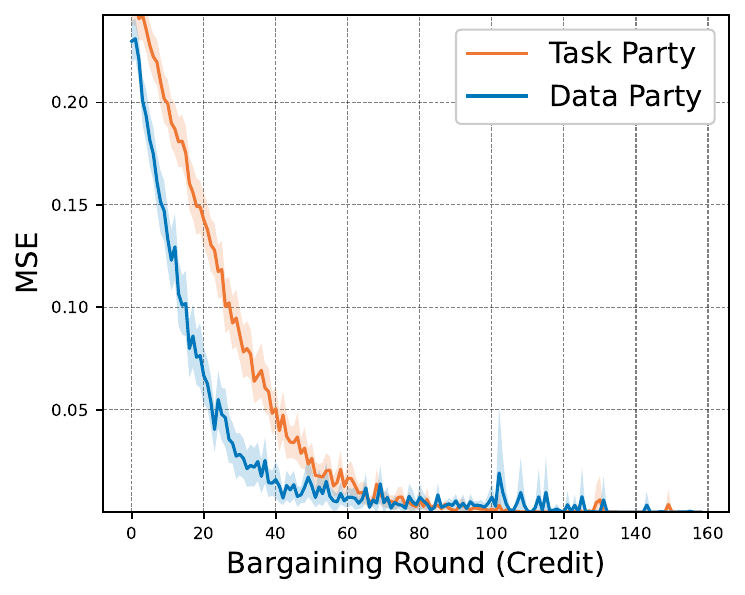}
}%
\subfigure[3-Layer MLP (Credit) ]{
\centering
\includegraphics[width=0.46\linewidth]{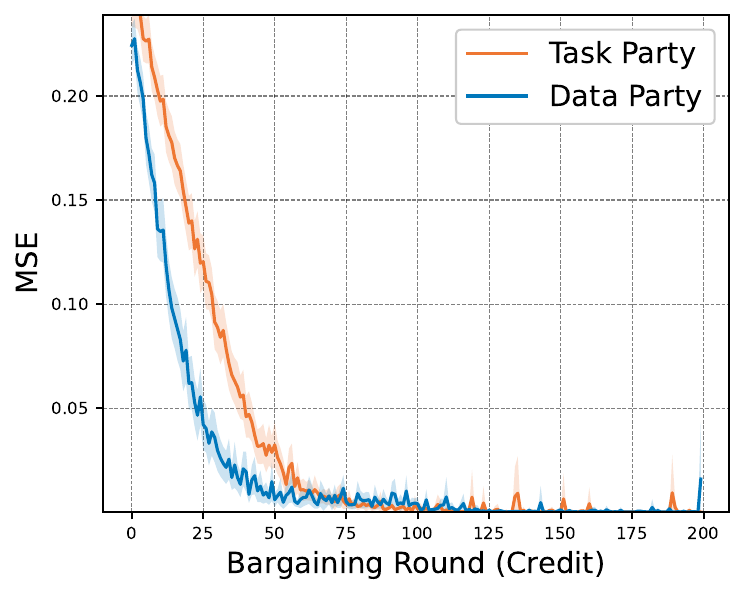}
}%

\subfigure[Random Forest (Adult)]{
\centering
\includegraphics[width=0.46\linewidth]{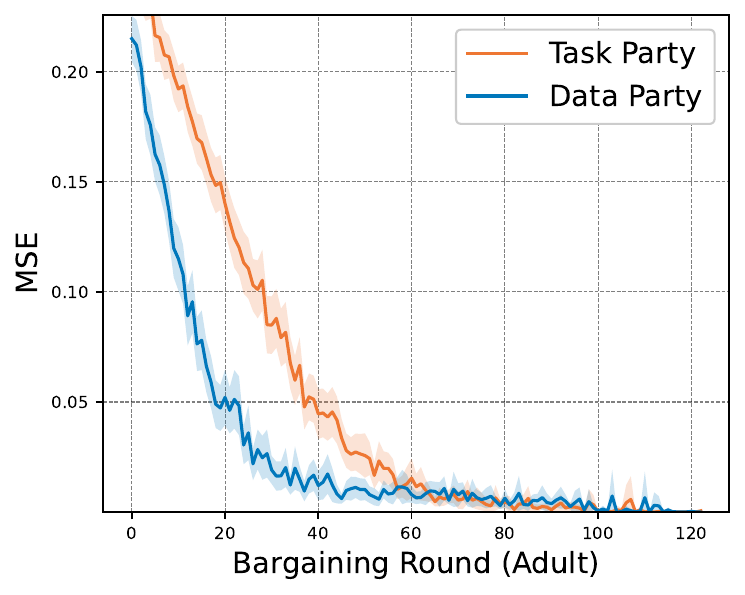}
}%
\subfigure[3-Layer MLP (Adult) ]{
\centering
\includegraphics[width=0.46\linewidth]{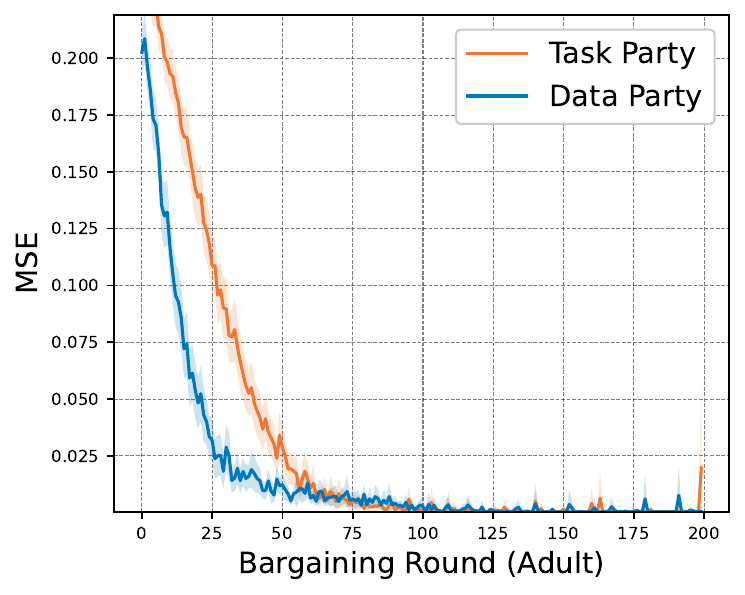}
}%
\centering
\caption{The MSE of $\DG$ estimation networks on two parties.}
\label{fig:convergence}
\end{figure}

\subsubsection{Comparison with Bargaining with Perfect Performance Information}
To evaluate if the bargaining under imperfect performance information is effective, we compare the bargaining-related final and intermediate variables with the perfect performance information setting and obtain the results in Table \ref{tb:imp}. $\Delta p$ and $\Delta P_0$ denote the difference value of $p-p_l$ and $P_0-P_{l}$, where $p_l$ and $P_l$ are the reserved prices of the task party's target feature bundle. For the bargaining termination conditions, we set $\epsilon_d=\epsilon_t=5e-2$ on the Titanic dataset, $1e-3$ on the Credit dataset and $5e-3$ on the Adult dataset. Due to limited space, we only report the results on Credit and Adult dataset. The initial state of the bargaining is set as the same for the two settings. We run the bargaining 100 times and report the mean and std values. We record that the corresponding values are negative infinitely small if the bargaining terminates with the transaction fails. 

We can conclude from the table that the bargaining model under imperfect performance information is effective and comparable to the perfect performance information setting in most cases, as the final $\DG$, net profit and payment are usually of reasonable magnitudes. Moreover, the std values demonstrate that the variability in the results is also relatively low with no infinitely large values, indicating that the bargaining process is consistent and stable across multiple runs, even under imperfect performance information. This highlights the robustness of the bargaining model and its potential applicability in real-world scenarios where perfect information may not always be available.

\subsubsection{$\DG$ Estimation Convergence}

It can be found in Figure \ref{fig:convergence} that the convergence of the estimation networks on both parties over the course of the bargaining rounds. The estimation networks converge quickly within the first 20-30 rounds, and the accuracy of the estimation improves gradually with more rounds of bargaining. At around 100, the estimation is precious enough to conduct the true bargaining, which verifies the effectiveness of the estimation while bargaining approach.

\section{Related Work}

\textbf{Federated Learning}. FL approaches ensure data privacy as raw data never leaves its original location. According how data is partitioned in the sample and feature space , FL can be broadly classified into three main categories: horizontal federated learning (HFL) \cite{li2022federated,shi2023towards,zhu2021federated},  vertical federated learning (VFL) \cite{liu2022vertical,wei2022vertical,li2023vertical}, and Federated Transfer Learning (FTL) \cite{yang2019federated}. HFL allows collaborative training on clients with different data samples but the same features space. VFL enables training of clients' datasets sharing the same samples while holding different features. FTL deals with the scenario where clients' datasets differ in both sample and feature space. Besides importing FL models' effectiveness and efficiency, the commercialization of FL has attracted increasing attention. Most current works focus on evaluating the contribution of participants \cite{song2019profit,wang2020principled,liu2022gtg} or proposing incentive mechanism to facilitate participation of data owners \cite{tu2022incentive,zhan2021survey,sim2020collaborative}, while the model/data trading and pricing in FL has only limited studied \cite{zheng2022fl}.\\
\textbf{Data Pricing and Valuation}. There is a growing amount of literature on data valuation, data trading, and pricing mechanisms in data markets, particularly in the context of digital products and services \cite{pei2020survey,driessen2022data,azcoitia2022survey}. Researchers have proposed various approaches to determine the value of data, such as market-based approaches, cost-based approaches, and income-based approaches. Market-based approaches rely on supply and demand dynamics to determine the value of data \cite{chawla2019revenue,fernandez2020data,muschalle2013pricing}, while cost-based approaches consider the cost of acquiring, storing, and processing data \cite{zheng2022fl,acquisti2016economics,fleischer2012approximately}. Income-based approaches, on the other hand, look at the potential revenue generated from the use of data/models \cite{cong2022data}. Our bargaining model falls into income-based approach. In terms of pricing mechanisms, subscription-based \cite{alaei2023optimal}, and auction-based \cite{jiao2017profit,cao2017data,gao2018privacy,zhang2021optimizing} pricing models have been proposed and used in data markets. 
In the VFL market, we choose to use one-one bargaining-based solution for that VFL is usually conducted between one task party and one data party and the iterative process of bargaining allows for the probability of reaching mutual-beneficial outcomes.
\section{Conclusions and Future Work}
In this paper, we have identified the need for an economically efficient approach to feature trading in VFL and proposed a bargaining-based model to address this issue. Our model incorporates performance gain-based pricing and analyzes the bargaining process under perfect and imperfect performance information settings. We have demonstrated the existence of an equilibrium that optimizes the objectives of both the task party and the data party in perfect performance information setting. Additionally, we extend the result to imperfect performance information scenarios and propose performance gain estimation-based bargaining strategies. We also discuss potential security issues and solutions. Experiments on real-world datasets have verified the effectiveness of the proposed bargaining model. 

However, the proposed bargaining model has several limitations. 1) It does not provide protection if the participants manipulate the goods or information when terminate the game. For example, the task party may accept a feature bundle with high performance gain but only report a lower value to reduce its payment. A possible solution for this is to involve of a trustworthy third party for evaluation. 2) The sampling-evaluation based quoted pricing choosing strategy is straightforward but not efficient and the task party can employ automatic bargaining offer strategy, such as learning based, to optimize the efficient of offer generating. Nevertheless, we maintain a positive view of the impact of this paper and hope it serves as a foundational work on FL markets and inspires future research.

\bibliographystyle{IEEEtran}
\bibliography{sample}

\end{document}